\documentclass[10pt,twocolumn,letterpaper]{article}

\usepackage{wacv}
\usepackage{times}
\usepackage{epsfig}
\usepackage{graphicx}
\usepackage{amsmath}
\usepackage{amssymb}
\usepackage{indentfirst}

\usepackage{bbold}
\usepackage{dsfont}

\usepackage{enumerate}
\usepackage[toc,page]{appendix}
\usepackage[font=small,labelfont=bf]{caption}

\usepackage{xcolor}
\usepackage{amsfonts}

\usepackage{caption}
\usepackage{subcaption}
\usepackage{bm}
\usepackage{isomath}

\usepackage[numbers]{natbib}

\usepackage[pagebackref=true,breaklinks=true,letterpaper=true,colorlinks,bookmarks=false]{hyperref}
\usepackage{footmisc}
\usepackage{stmaryrd}

\usepackage{fixltx2e}
\usepackage{dblfloatfix}
\usepackage{pbox}
\usepackage{capt-of}

\usepackage[normalem]{ulem}
\usepackage{multirow}

\usepackage{colortbl}

\usepackage{mathtools}
\usepackage{array}

\usepackage{amsmath}
\usepackage{amssymb}
\usepackage{amsthm}
\usepackage{indentfirst}

\makeatletter
\@namedef{ver@everyshi.sty}{}
\makeatother
\usepackage{pgf}

\renewcommand\vec[1]{\ensuremath\boldsymbol{#1}}
\renewcommand\cdots{...}

\newcommand{\tF}{\vec{\mathcal{F}}}

\newcommand{\mY}{\mathbf{Y}}

\newcommand{\tX}{\vec{\mathcal{X}}}

\newcommand{\mX}{\mathbf{X}}
\newcommand{\vx}{\mathbf{x}}

\newcommand{\mbr}[1]{\mathbb{R}^{#1}}

\newcommand{\idx}[1]{\mathcal{I}_{#1}}

\newcommand{\vphi}{\boldsymbol{\phi}}
\newcommand{\vpsi}{\boldsymbol{\psi}}

\newcommand{\mPsi}{\vec{\Psi}}


\newcommand{\set}[1]{\left\{#1\right\}}

\DeclareMathOperator*{\argmin}{arg\,min}

\DeclareMathOperator*{\trace}{Tr}
\DeclareMathOperator*{\kronstack}{\uparrow\!\otimes}

\DeclareMathOperator*{\avg}{avg}

\newcommand{\expl}[1]{\text{e}^{#1}}

\newcommand{\suptensorr}[2]{\mathfrak{S}^{#1}_{\times^{#2}}}

\newtheorem{proposition}{Proposition}
\newtheorem{remark}{Remark}

\def\eg{\emph{e.g.}}

\newcommand{\mygthree}[1]{\boldsymbol{\mathcal{G}}\!\left(\!#1\!\right)}

\newcommand{\tG}{\boldsymbol{\mathcal{G}}}

\newcommand{\mIdent}{\boldsymbol{\mathds{I}}}

\newcommand{\vOnes}{\mathbb{1}}

\newcommand{\tS}{\vec{\mathcal{S}}}

\newcommand{\tNnb}{\mathcal{N}}

\newcommand{\mPi}{\boldsymbol{\Pi}}

\newcommand{\mPhi}{\boldsymbol{\Phi}}

\newcommand{\mM}{\boldsymbol{M}}

\newcommand{\vmu}{\boldsymbol{\mu}}

\newcommand{\stkout}[1]{{\ifmmode\text{\sout{\ensuremath{#1}}}\else\sout{#1}\fi}}

\DeclareMathOperator*{\arcsinh}{arcsinh}
\newcommand{\comment}[1]{}
\newcommand*{\barbar}[1]{\bar{\bar{#1}}}

\usepackage{tikz}



\wacvfinalcopy 


\ifwacvfinal\pagestyle{empty}\fi
\setcounter{page}{1}
\begin{document}

\title{Power Normalizing Second-order Similarity Network for Few-shot Learning}

\author{Hongguang Zhang \\
Australian National University, Data61/CSIRO\\
{\tt\small hongguang.zhang@anu.edu.au}
\and
Piotr Koniusz \\
Australian National University, Data61/CSIRO\\
{\tt\small piotr.koniusz@\{anu.edu.au, data61.csiro.au\}}
}

\maketitle
\ifwacvfinal\thispagestyle{empty}\fi

\vspace{-0.2cm}
\begin{abstract}
\vspace{-0.2cm}
Second- and higher-order statistics of data points have played an important role in advancing the state of the art on several computer vision problems such as the fine-grained image and scene recognition. However, these statistics need to be passed via an appropriate pooling scheme to obtain the best performance. Power Normalizations are non-linear activation units which enjoy probability-inspired derivations and can be applied in CNNs. In this paper, we propose a similarity learning network leveraging second-order information and Power Normalizations. To this end, we propose several formulations capturing second-order statistics and derive a sigmoid-like Power Normalizing function to demonstrate its interpretability. Our model is trained end-to-end to learn the similarity between the support set and query images for the problem of one- and few-shot learning. The evaluations on Omniglot, \textit{mini}Imagenet and Open MIC datasets demonstrate that this network obtains state-of-the-art results on several few-shot learning protocols.
\end{abstract}

\vspace{-0.15cm}
\section{Introduction}
\label{sec:intro}
CNNs have improved the classification results on numerous tasks such as the object category recognition,  scene classification and fine-grained image recognition. However, they rely on a large number of labeled images for training and the training procedure itself is a time consuming task. 

In contrast, people's ability to learn and recognize new objects and even complex visual concepts from very few samples highlights the superiority of biological vision over artificial CNNs. Inspired by the human ability of learning in the few-samples regime, researchers have turned their attention to the so-called problem of few-shot learning for which networks are trained by the use of only few training samples. Recently, simple relation-learning deep networks have been proposed \cite{vinyals2016matching,snell2017prototypical,sung2017learning,NIPS2017_7082} which can be viewed as a form of metric learning \cite{metric_old,kissme,Mehrtash_CVPR_2018} adapted to the few-shot learning scenario. We take a similar view on the one- and few-shot learning problem, however, we will focus on capturing robust statistics for the purpose of similarity learning.

Second-order statistics of data features have played a pivotal role in advancing the state of the art on several problems in computer vision, including object recognition, texture categorization, action representation, and human tracking, to name a few of applications \cite{tuzel_rc,porikli2006tracker,wang2011tracking,guo2013action,carreira_secord,me_tensor_tech_rep,koniusz2017higher}. For example, in the popular region covariance descriptors~\cite{tuzel_rc}, a covariance matrix, which is computed over multi-modal features from image regions, is used as an object representation for recognition and tracking, and has been extended to several other applications~\cite{tuzel_rc, porikli2006tracker,wang2011tracking,guo2013action}. Recently, second-order representations have been extended to CNNs for end-to-end learning and they obtained state-of-art results on action recognition, texture classification, scene and fine-grained recognition tasks \cite{koniusz2016tensor,koniusz2017domain,koniusz2018deeper}. 

In this paper, we  design a similarity learning network by leveraging second-order statistics and Power Normalization. In contrast to the large-scale object recognition, one- and few-shot learning problems warrant an investigation into these representations to determine the best performing architecture. As second-order statistics require appropriate pooling for such a new task, we also investigate pooling via Power Normalizations. It is known that such operators act as detectors of visual words and thus they discard nuisance variability related to the frequency of certain visual words  varying from image to image of the same class \cite{koniusz2018deeper}. We speculate that, as we capture relationships between multiple images in a so-called episode to learn similarity between images, such nuisance variability would be multiplicative in its nature in the number of images per episode. Power Normalizations can limit such a harmful effect.

For contributions, we (i) investigate how to build second-order representations from feature maps of the last fully-convolutional layer of CNN to devise relationship descriptors, (ii) we propose to permute second-order matrices to capture local correlations, (iii) we derive an interpretable non-linearity for pooling (our Appendix) from the Power Normalization family \cite{koniusz2018deeper}, (iv) we propose a new one-shot learn. protocol for the challenging Open MIC dataset \cite{me_museum}.

We are the first to embed second-order statistics and Power Normalization into one- and few-shot learning. The relationship descriptors passed via such a non-linearity are then passed to a so-called similarity learning network.

\section{Related Work}
\label{sec:related}

In what follows, we describe popular zero-, one- and few-shot learning algorithms followed by a short discussion on second-order statistics and Power Normalization.

\subsection{Learning From Few Samples}
\label{sec:related_few_shot}

For deep learning algorithms, the ability of {\em``learning quickly from only a few examples is definitely the desired characteristic to emulate in any brain-like system''} \cite{book_nip}. This desired operating principle poses a challenge to typical CNN-based big data approaches \cite{ILSVRC15} designed for large scale visual category learning. Current trends in computer vision highlight the need for {\em ``an ability of a system to recognize and apply knowledge and skills learned in previous tasks to novel tasks or new domains, which share some commonality''}. This problem was introduced in 1901 under a notion of ``{\em transfer of particle}''~\cite{woodworth_particle} and is closely related to zero-shot learning \cite{larochelle2008zero,farhadi2009describing,akata2013label} which can be defined as an ability to generalize to unseen class categories from categories seen during training. For one- and few-shot learning, some ``{\em transfer of particle}'' is also a desired mechanism as generalizing from one or few datapoints to account for intra-class variability of thousands images is a formidable task.

\vspace{0.05cm}
\noindent{\textbf{One- and Few-shot Learning }} has been  studied widely  in computer vision in both shallow \cite{miller_one_example,Li9596,NIPS2004_2576,BartU05,fei2006one,lake_oneshot} and deep learning scenarios \cite{koch2015siamese,vinyals2016matching,snell2017prototypical,finn2017model,snell2017prototypical,sung2017learning}. 

Early works \cite{fei2006one,lake_oneshot} propose one-shot learning methods motivated by the observation that humans can learn new concepts from very few samples. These two papers employ a generative model with an iterative inference for transfer. 

Siamese Network \cite{koch2015siamese} presents a two-streams convolutional neural network approach which generates image descriptors and learns the similarity between them. 
Matching Network \cite{vinyals2016matching} introduces the concept of support set and $L$-way $Z$-shot learning protocols. It  captures the similarity between one testing and several support images, thus casting the one-shot learning problem as set-to-set learning. Such a network work with unobserved classes without any modifications. 
Prototypical Networks \cite{snell2017prototypical} learns a model that computes distances between a datapoint and prototype representations of each class. %
Model-Agnostic Meta-Learning (MAML) \cite{finn2017model} 
is trained on a variety of different learning tasks. 
Lastly, Relation Net \cite{sung2017learning} 
is an effective end-to-end network for learning the relationship between testing and support images. Conceptually, this model is similar to Matching Network \cite{vinyals2016matching}. However, Relation Net leverages an additional deep neural network to learn similarity on top of the image descriptor generating network. Then, the similarity network generates so-called relation scores. 

Our work is somewhat similar to Relation Net \cite{sung2017learning} in that we use the feature encoding and the similarity networks. However, approach \cite{sung2017learning} uses first-order representations for similarity learning. In contrast, we investigate second-order representations to capture co-occurrences of features. To this end, we also look at Power Normalizing functions which role is to aggregate feat. vectors robustly.

\vspace{0.05cm}
\noindent{\textbf{Zero-shot Learning}} can be implemented within the similarity learning frameworks which follow \cite{koch2015siamese,vinyals2016matching,snell2017prototypical,sung2017learning}. Below we summarize popular zero-shot learning methods.

Attribute Label Embedding (ALE) \cite{akata2013label} uses attribute vectors as label embedding and uses an objective inspired by a structured WSABIE ranking method, which assigns more importance to the top of the ranking list. Embarrassingly Simple Zero-Shot Learning (ESZSL) \cite{romera2015embarrassingly} implements regularization terms for a linear mapping to penalize the projection of feature vectors to the attribute space and the projection of attribute vectors to the feature space. Zero-shot Kernel Learning (ZSKL) \cite{zhang2018zero} proposes a non-linear kernel method which realizes the compatibility function. The weak incoherence constraint is applied in this model to make the columns of projection matrix incoherent. Feature Generating Networks \cite{xian2017feature} uses a generative adversarial network to hallucinate feature vectors for unobserved classes, thus correcting the imbalance in the data distribution. Differently \cite{zhang2018model} proposes to train an extra SVM to detect if samples are from seen classes or unseen classes, thus employing different classifier to solver the problem.

\subsection{Second-order Statistics/Power Normalization}
\label{sec:related_pn}

Below we discuss several shallow and CNN-based methods which use second-order statistics followed by details on so-called pooling and Power Normalization functions.

\vspace{0.05cm}
\noindent{\textbf{Second-order statistics }}
have been studied in the context of texture recognition \cite{tuzel_rc, elbcm_brod} by so-called Region Covariance Descriptors (RCD) 
which have also been  applied to tracking \cite{porikli2006tracker}, semantic segmentation \cite{carreira_secord} and object category recognition \cite{me_tensor_tech_rep,koniusz2017higher}, to name but a few of applications.

Recently, using co-occurrence patterns in CNN setting, similar in spirit to RCD, has become a popular direction. Approach \cite{bilinear_finegrained} fuses two CNN streams via outer product in the context of fine-grained image recognition. 
A recent approach \cite{deep_cooc} extracts feature vectors at two separate locations in feature maps and performs an outer product to form a CNN co-occurrence layer. Higher-order statistics have also been used for action classification from the body skeleton sequences \cite{koniusz2016tensor} and for domain adaptation \cite{koniusz2017domain}.

Our work differs from the above methods in that we perform end-to-end training for one- and few-shot learning via relation learning. We devise a relation descriptor that captures several support and query images before being passed to the similarity learning network extracting relationship. 

\vspace{0.05cm}
\noindent{\textbf{Power Normalizations. }}
Second-order statistics have to deal with the so-called burstiness which is ``{\em the property that a given visual element appears more times in an image than a statistically independent model would predict}'' \cite{jegou_bursts}. Power Normalization~\cite{me_ATN,me_tensor_tech_rep} is known to suppress this burstiness 
and has been extensively studied and evaluated in the context of Bag-of-Words \cite{me_ATN,me_tensor_tech_rep,koniusz2017higher,koniusz2018deeper}. 

A theoretical relation between Average and Max-pooling was studied in~\cite{boureau_midlevel}. 
An analysis of pooling was conducted in~\cite{boureau_pooling} under  assumptions on distributions from which 
features are drawn.  
Later, a survey \cite{me_ATN} showed that so-called MaxExp feature pooling in~\cite{boureau_pooling} can be interpreted as a detector of ``\emph{at least one particular visual word being present in an image}''. According to the survey \cite{me_ATN}, many Power Normalization functions are closely related. This view was further extended to pooling on second-order matrices in  
\cite{koniusz2018deeper}. 

Our approach leverages Power Normalization functions for second-order statistics. As a theoretical contribution, we present an extended derivation of pooling which operates on auto-correlation matrices that typically contain positive and negative evidence of feature co-occurrences. In contrast, Power Normalization functions such as the square root and MaxExp operate only on non-negative feature values \cite{me_ATN}.

\section{Background}
Below we detail our notations, explain how to compute second-order statistics and 
Power Normalization functions. 

\subsection{Notations}
Let $\vx\in\mbr{d}$ be a $d$-dimensional feature vector. $\idx{N}$ stands for the index set $\set{1, 2,\cdots,N}$. Let $\vx\in\mbr{d}$ be a $d$-dimensional feature vector. Then we use $\tX\!=\!{\kronstack}_r\,\vx$ to denote the $r$-mode super-symmetric rank-one tensor $\tX$ generated by the $r$-th order outer-product of $\vx$, where the element of $\tX\!\in\!\suptensorr{d}{r}$ at the $\left(i_1,i_2,\cdots, i_{r}\right)$-th index is given by $\Pi_{j=1}^r x_{i_j}$.  
%
%
We also define $\vOnes\!=\![1,\cdots,1]^T$. Operators $;_1$, $;_2$ and $;_3$ denote concatenation along the first, second and third mode, respectively. Operator $(:)$ denotes vectorisation of a matrix or tensor. Typically, capitalised boldface symbols such as $\mPhi$ denote matrices, lowercase boldface symbols such as $\vphi$ denote vectors and regular case such as $\Phi_{ij}$, $\phi_{i}$, $n$ or $Z$ denote scalars, \eg~$\Phi_{ij}$ is the $(i,j)$-th coefficient of $\mPhi$.

\subsection{Second- and High-order Tensors}
\label{sec:som}
Below we show that second- or higher-order tensors emerge from a linearization of sum of Polynomial kernels.
\begin{proposition}
\label{pr:linearize}
Let $\mPhi_A\equiv\{\vphi_n\}_{n\in\tNnb_{\!A}}$, $\mPhi_B\!\equiv\{\vphi^*_n\}_{n\in\tNnb_{\!B}}$ be datapoints from two images $\Pi_A$ and $\Pi_B$, and $N\!=\!|\tNnb_{\!A}|$ and $N^*\!\!=\!|\tNnb_{\!B}|$ be the numbers of data vectors \eg, obtained from the last convolutional feat. map of CNN for images $\Pi_A$ and $\Pi_B$. Tensor feature maps result from a linearization of the sum of Polynomial kernels of degree $r$:
\vspace{-0.1cm}
\begin{align}
& K(\mPhi_A, \mPhi_B)\!=\!\left<\mPsi(\mPhi_A),\mPsi(\mPhi_B)\right>\!=\label{eq:hok1}\\
& \!\!\!\!\frac{1}{NN^*\!}\!\!\sum\limits_{
n\in \tNnb_{\!A}}\sum\limits_{n'\!\in \tNnb_{\!B}\!}\!\!\!\left<\vphi_n, \vphi^*_{n'}\right>^r\!\text{ where } 
\mPsi(\mPhi)\!=\!\frac{1}{N}\sum\limits_{
n\in \tNnb}{\kronstack}_r\,\vphi_n.\!\!\nonumber
\end{align}
\end{proposition}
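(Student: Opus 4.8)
The plan is to reduce the claim to one elementary identity about inner products of rank-one super-symmetric tensors and then finish by bilinearity; since everything in sight is a finite sum, no analytic subtleties arise.

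\emph{Step 1 (the rank-one identity).} First I would prove that for any $\vu,\vv\in\mbr{d}$,
\[
\left<{\kronstack}_r\,\vu,\;{\kronstack}_r\,\vv\right>=\left<\vu,\vv\right>^r .
\]
By the coordinate description from the Notations, the $(i_1,\dots,i_r)$-th entry of ${\kronstack}_r\,\vu\in\suptensorr{d}{r}$ is $\prod_{j=1}^r u_{i_j}$, so the tensor (Frobenius) inner product equals $\sum_{i_1,\dots,i_r}\prod_{j=1}^r u_{i_j}v_{i_j}$. Distributing the product over the independent index slots (a finite Fubini step) factorizes this into $\prod_{j=1}^r\big(\sum_{i=1}^d u_i v_i\big)=\left<\vu,\vv\right>^r$.

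\emph{Step 2 (bilinearity and conclusion).} Next I would substitute the definitions of $\mPsi(\mPhi_A)$ and $\mPsi(\mPhi_B)$ and use bilinearity of the tensor inner product to pull the two averages outside:
\[
\left<\mPsi(\mPhi_A),\mPsi(\mPhi_B)\right>=\frac{1}{NN^*}\sum_{n\in\tNnb_{\!A}}\sum_{n'\in\tNnb_{\!B}}\left<{\kronstack}_r\,\vphi_n,\;{\kronstack}_r\,\vphi^*_{n'}\right> .
\]
Applying Step 1 termwise replaces each summand by $\left<\vphi_n,\vphi^*_{n'}\right>^r$, which is exactly the normalized sum of degree-$r$ Polynomial kernels $K(\mPhi_A,\mPhi_B)$, and simultaneously exhibits $\mPsi$ as its explicit finite-dimensional feature map.

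\emph{Main obstacle.} There is essentially no obstacle beyond bookkeeping. The one point to be careful about is that the tensor layout $\suptensorr{d}{r}$ is indexed consistently across the two arguments, so that the coordinate sum in Step 1 genuinely decouples over the $r$ slots; once the layout is fixed as in the Notations, the factorization is immediate. (Super-symmetry of ${\kronstack}_r\,\vphi_n$ may be noted in passing, but it is not needed for the identity.)
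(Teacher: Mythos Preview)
Your argument is correct and complete: the rank-one identity $\left<{\kronstack}_r\,\vu,{\kronstack}_r\,\vv\right>=\left<\vu,\vv\right>^r$ followed by bilinearity is exactly the standard derivation of this linearization. The paper itself does not prove the proposition in-text but simply refers the reader to \cite{me_tensor_tech_rep,koniusz2017higher}; your write-up is the self-contained elementary argument those references carry, so there is nothing to compare at the level of ideas --- you have supplied what the paper outsourced.
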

\begin{proof}
See \cite{me_tensor_tech_rep,koniusz2017higher} for a derivation of this expansion.
\end{proof}

\begin{remark}
\label{re:sec_ord_model}
In what follows, we will use second-order matrices obtained from the above expansion for $r\!=\!2$ built from datapoints $\vphi$ and $\vphi^*\!$ which are partially shifted by their means $\vmu$ and $\vmu^*\!$ so that $\vphi\!:=\!\vphi\!-\!\beta\vmu$, $\vphi^*\!:=\!\vphi^*\!-\!\beta\vmu^*\!$, and $0\!\leq\!\beta\!\leq\!1$ to account for so-called negative visual words which are the evidence of
lack of a given visual stimulus in an image \cite{koniusz2018deeper}. 
\comment{
\begin{align}
& \!\!\!\!\!\!\!\frac{1}{NN^*\!}\!\!\sum\limits_{
n\in \tNnb_{\!A}}\sum\limits_{n'\!\in \tNnb_{\!B}}\!\!\!\left<\vphi_n, \vphi^*_{n'}\!\right>^2\!\!=\!
\Big\langle\frac{1}{N}\sum\limits_{
n\in \tNnb_{\!A}}{\vphi_n\vphi_n^T}, \frac{1}{N^*\!}\sum\limits_{
n\in \tNnb_{\!B}}{\vphi^*_{n'}{\vphi^*_{n'}}^{\!\!\!T}}\Big\rangle.\!\!\label{eq:hok2}
\end{align}
}
\footnotetext[3]{\label{foot:maps}Note that (kernel) feature maps are not conv. CNN maps.  
}
We introduce a pooling operator $\mygthree{\,\mX}\!=\!\mX$ which will be replaced by various Power Normalization functions. This yields a (kernel) feature map{\color{red}\footnotemark[3]}:
\begin{align}
& \!\!\!\mPsi\left(\{\vphi_n\}_{n\in\tNnb}\right)=\tG\Big(\frac{1}{N}\sum_{n\in\mathcal{N}}\!\vphi_n\vphi_n^T\Big)=\tG\Big(\frac{1}{N}\mPhi\mPhi^T\Big).\label{eq:hok3}
\end{align}
\vspace{-0.5cm}
\end{remark}

\subsection{Power Normalization Family}

Max-pooling \cite{boureau_midlevel} was derived by drawing features from the Bernoulli distribution under the i.i.d. assumption \cite{boureau_pooling} which leads to so-called {\em Theoretical Expectation of Max-pooling} ({\em MaxExp}) operator \cite{me_ATN}  extended to work with non-negative feature co-occurrences \cite{koniusz2018deeper} as detailed below.

\begin{proposition}
\label{pr:cooc}
Assume two event vectors $\vphi,\vphi'\!\!\in\!\{0,1\}^{N}$ which store the $N$ trials each, performed according to the Bernoulli distribution under i.i.d. assumption, 
 for which the probability $p$ of an event $(\phi_{n}\!\cap\!\phi'_{n}\!=\!1)$ denotes a co-occurrence and $1\!-\!p$, for $(\phi_{n}\!\cap\!\phi'_{n}\!=\!0)$, denotes the lack of it, and $p$ is estimated as an expected value $p\!=\!\avg_n\phi_n\phi'_{n}$. Then the probability of at least one co-occurrence event $(\phi_{n}\!\cap\!\phi'_{n}\!=\!1)$ in $\phi_n$ and $\phi'_n$ simultaneously in $N$ trials becomes $\psi\!=\!1\!-\!(1\!-\!p)^{N}$.
\end{proposition}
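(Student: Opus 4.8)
The plan is to reduce the statement to the elementary fact that, for a finite sequence of independent Bernoulli trials, the probability of at least one success equals one minus the probability of no success at all. First I would introduce, for each trial index $n\in\idx{N}$, the product indicator $z_n\equiv\phi_n\phi'_n\in\{0,1\}$, which by construction equals $1$ precisely on the co-occurrence event $(\phi_n\cap\phi'_n\!=\!1)$ and $0$ otherwise. Under the hypothesis, each $z_n$ is a Bernoulli variable with $\Pr(z_n\!=\!1)\!=\!p$ and $\Pr(z_n\!=\!0)\!=\!1\!-\!p$, and the $z_n$ are mutually independent because the underlying trials are i.i.d. This also pins down the interpretation of the estimator: since $\mathbb{E}[z_n]=p$ for every $n$, the empirical average $\avg_n\phi_n\phi'_n$ is an unbiased (and, as $N\!\to\!\infty$, consistent) estimate of $p$, i.e. the normalized co-occurrence count $\langle\vphi,\vphi'\rangle/N$ that subsequently enters the second-order matrix.

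Next I would compute the complementary event directly. Let $S=\sum_{n\in\idx{N}} z_n$ be the number of co-occurrences; by independence $S\sim\mathrm{Binomial}(N,p)$. The event ``at least one co-occurrence in $N$ trials'' is $\{S\geq 1\}$, whose complement is $\{z_1\!=\!\cdots\!=\!z_N\!=\!0\}=\{S\!=\!0\}$, and $\Pr(S\!=\!0)=\prod_{n\in\idx{N}}\Pr(z_n\!=\!0)=(1\!-\!p)^N$. Hence $\psi=\Pr(S\geq 1)=1-(1\!-\!p)^N$, which is exactly the claimed formula. I would then record the boundary and monotonicity sanity checks that give this quantity its ``detector'' reading: $\psi$ is nondecreasing in both $p\in[0,1]$ and $N\in\mathbb{N}$, vanishes at $p=0$, and tends to $1$ as $p\to 1$ or $N\to\infty$, so $\psi$ saturates once a co-occurrence is likely to have been observed at least once — the behaviour expected of MaxExp-style pooling.

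Honestly, there is no real obstacle inside the stated scope: the derivation is a two-line complement-of-a-product argument, and the only thing demanding care is conceptual rather than technical, namely the passage from genuine binary event vectors $\vphi,\vphi'\in\{0,1\}^N$ to the real-valued (and, after the mean-shift of Remark~\ref{re:sec_ord_model}, possibly signed) auto-correlation entries $\phi_n\phi'_n$ that actually populate $\frac{1}{N}\mPhi\mPhi^T$. Under the present assumption the $z_n$ are $\{0,1\}$-valued so independence is immediate and the Bernoulli/Binomial bookkeeping applies verbatim; it is precisely this binary assumption that will have to be relaxed in the extended derivation (our Appendix), which is where the genuine work lies.
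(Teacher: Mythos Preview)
Your argument is correct and complete: reducing to the product indicator $z_n=\phi_n\phi'_n$, invoking independence, and computing $\psi=1-\Pr(S=0)=1-(1-p)^N$ is exactly the right elementary route. The paper itself does not give a proof at all for this proposition; it simply refers the reader to \cite{koniusz2018deeper}, so your write-up is in fact more self-contained than what appears in the text. Your added remarks on the estimator interpretation and the monotonicity/saturation behaviour are accurate and align with the ``detector'' reading the paper leans on later.
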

\begin{proof}
See \cite{koniusz2018deeper} for a detailed proof.
\end{proof}
\begin{remark} 
\label{re:maxexp}
A practical variant of this pooling method \cite{koniusz2018deeper} is given by $\psi_{kk'}\!=\!1\!-\!(1\!-\!\avg_n\phi_{kn}\phi_{k'n})^{\eta}$, where $0\!<\!\eta\!\approx\!N$ is an adjustable parameter, $\phi_{kn}$ and $\phi_{k'n}$ are $k$-th and  $k'\!$-th features of an $n$-th feature vector $0\!\leq\!\vphi\!\leq\!1$ \eg, as defined in Prop. \ref{pr:linearize}, which is normalized to range 0--1.
\end{remark}
\begin{remark}
\label{re:pn}
It was shown in \cite{koniusz2018deeper} that Power Normalization ({Gamma}) given by $\psi_{kk'}\!=\!(\avg_n\phi_{kn}\phi_{k'n})^\gamma$, for $0\!<\!\gamma\!\leq\!1$ being a parameter, is in fact an approximation of MaxExp. In matrix form, if $\mM\!=\!\frac{1}{N}\mPhi\mPhi^T\!$, matrix $0\!\leq\!\mPhi\!\leq\!1$ contains datapoints $\vphi_1,\cdots,\vphi_N$ as column vectors, then $\mPsi\!=\!\mygthree{\,\mM,\gamma\,}\!=\mM^\gamma$ (element-wise rising to the power of  $\gamma$). 
\end{remark}

\begin{figure*}[t]
\vspace{-0.3cm}
\centering
\includegraphics[height=5cm]{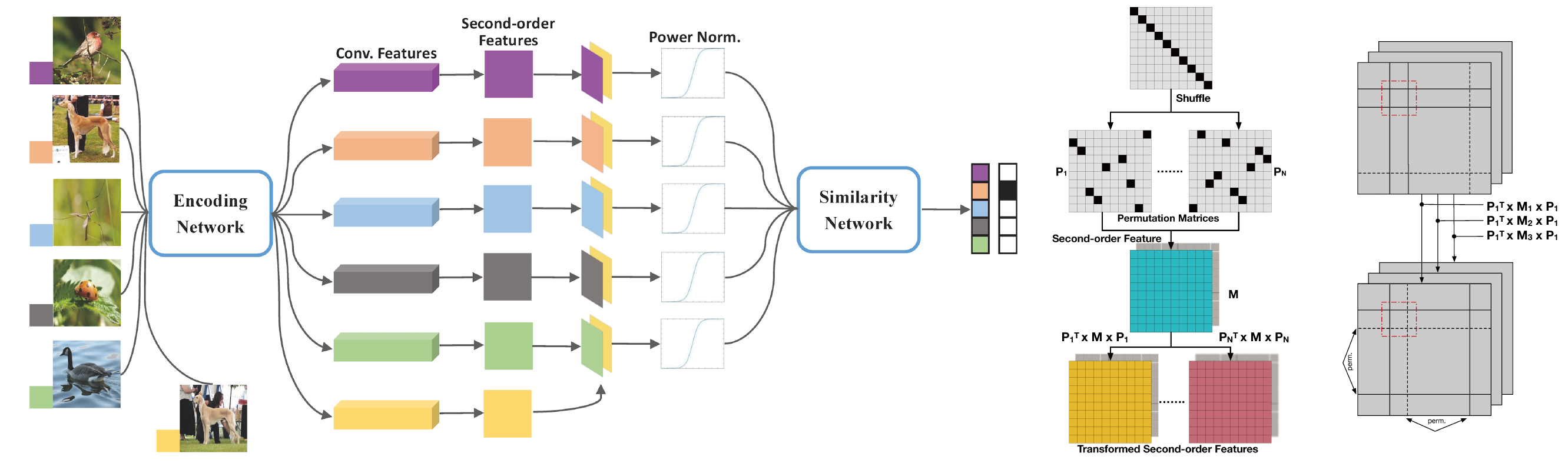}
\\\vspace{-0.4cm}
\begin{subfigure}[t]{0.50\linewidth}
\caption{\label{fig:pipe1}}
\end{subfigure}
\begin{subfigure}[t]{0.3\linewidth}
\caption{\label{fig:pipe2}}
\end{subfigure}
\begin{subfigure}[t]{0.01\linewidth}
\caption{\label{fig:pipe3}}
\end{subfigure}
\vspace{-0.4cm}
\caption{Our second-order similarity network. In Figure \ref{fig:pipe1}, the outer-product is applied on the convolutional features to obtain the second-order feature matrix, which is followed by a Power Normalization layer. Then the normalized second-order feature matrix is fed into the similarity network to learn the relation between support and query images. In Figure \ref{fig:pipe2}, we show that we can use permutations of second-order matrix so that local filters (red dashed square in Figure \ref{fig:pipe3}) of similarity network can capture various correlations.}
\label{fig:sosn}
\vspace{-0.3cm}
\end{figure*}

\begin{remark}
\label{re:asinhe}
It was shown in \cite{koniusz2018deeper} that AsinhE function is an extension of Gamma which acts on $\vphi\!\in\!\mbr{}$ rather than $\vphi\!\geq\!0$. It is defined as the Arcsin hyperbolic function:
\begin{align}
& \!\!\mPsi\!=\!\mygthree{\,\mM,\eta\,}\!=\arcsinh(\gamma'\!\mM)\!=\!\log(\gamma'\!\mM+\sqrt{1+{\gamma'}^2\!\mM^2}),
\end{align}
where $\mM\!=\!\frac{1}{N}\mPhi\mPhi^T\!$,  $\mPhi$ contains datapoints $\vphi_1,\cdots,\vphi_N$ as column vectors, param. $\gamma'$ corresponds to $\gamma$ in Remark \ref{re:pn}.
\end{remark}

\begin{remark}
\label{re:pnpn}
It was also shown in \cite{koniusz2018deeper} that the MaxExp operator can be extended to act on $\vphi\!\in\!\mbr{}$ rather than $\vphi\!\geq\!0$ by the use of Logistic a.k.a. Sigmoid ({\em SigmE}) functions:
\begin{align}
& \!\!\!\!\!\mPsi\!=\!\mygthree{\,\mM,\eta'\,}\!=\!\frac{2}{1\!+\!\expl{-\eta'\mM}}\!-\!1\text{ and }\frac{2}{1\!+\!\expl{\frac{-\eta'\mM}{\trace(\mM)+\lambda}}}\!-\!1,\!
\label{eq:sigmoid}
\end{align}
where $\mM$ is def. as in Remark \ref{re:asinhe}, parameter $\eta'$ corresponds to $\eta$ in Remark \ref{re:maxexp}, $\lambda\!\approx\!1e\!-\!6$ is a small regularization constant and $\trace(\mM)$ prevents input values from  exceeding one.
\end{remark}

In Section \ref{sec:pool_der}, we show that the SigmE operator indeed is a close approximation of our extended MaxExp$\,$($\pm$) pooling, which we derive in Section \ref{sec:pool_der}. It acts on $\vphi\!\in\!\mbr{}$ rather than $\vphi\!\geq\!0$ as our model in Remark \ref{re:sec_ord_model} may contain negative co-occurrences. In our experiments, we use AsinhE/SigmE rather than Gamma/MaxExp for the above reason. 

\section{Approach}
We start by describing the components of network 
followed by our relationship descriptor/operator which role is to capture co-occurrences in image representations. Next, we propose a permutation-based representation for second-order matrices. In Appendix, we derive  MaxExp$\,$($\pm$) pooling  which is closely approximated by SigmE pooling and offers an interpretation for such  operators.

\subsection{Network}

Inspired by the end-to-end relationship-learning network \cite{sung2017learning}, our Second-order Similarity Network (SoSN), shown in Figure \ref{fig:sosn}, consists of two major parts which are (i) feature encoding network and (ii) similarity network. The role of the feature encoding network is to generate convolutional feature vectors which are then used as image descriptors. The role of the similarity network is to learn the relation and compare so-called support and query image embeddings. Our work is different to the Relation Net \cite{sung2017learning} in that we apply and study several variants of second-order representations built from image descriptors followed by two variants of Power Normalizing functions. For instance, we construct the support and query second-order feature matrices followed by a non-linear Power Normalization unit. In SoSN, the feature encoding network remains the same as Relation Net \cite{sung2017learning}, however, the similarity network learns to compare from second- rather than the first-order statistics. %
The architecture of our network is given in Figures \ref{fig:sosn} and \ref{fig:blocks}.

Let us define the feature encoding network as an operator $f\!:(\mbr{W\!\times\!H}; \mbr{|\tF|})\!\shortrightarrow\!\mbr{K\!\times\!N}$, where $W$ and $H$ denote the width and height of an input image, $K$ is the length of feature vectors (number of filters), $N\!=\!N_W\!\cdot\!N_H$ is the total number of spatial locations in the last convolutional feature map. For simplicity, we denote an image descriptor by $\mPhi\!\in\!\mbr{K\!\times\!N}$, where $\mPhi\!=\!f(\mX; \tF)$ for an image $\mX\!\in\!\mbr{W\!\times\!H}$ and $\tF$ are the parameters-to-learn of the encoding network.

The similarity network, which role is to compare two datapoints encoded as some $K'\!$ dim. vectorized second-order representations, is denoted by $s\!:(\mbr{K'\!}; \mbr{|\tS|})\!\shortrightarrow\!\mbr{}$. Typically, we write $s(\vpsi; \tS)$, where $\vpsi\!\in\!\mbr{K'}\!$ and $\tS$ are the parameters-to-learn of the similarity network.


Moreover, we define a descriptor/operator $\vartheta\!:(\left\{\mbr{K\!\times\!N}\!\right\}^{Z}, \mbr{K\!\times\!N})\!\shortrightarrow\!K'\!$ which encodes some relationship between the descriptors built from the $Z$-shot support images and a query image. This relationship is encoded via computing second-order statistics of descriptors, performing some Power Normalizing function, applying \eg~concatenation, inner-product operation, subtraction, \etc.; all of these operations combined together can capture some form of relationship between features of images.  

For the $L$-way $1$-shot problem, we assume one support image $\mX$ with its image descriptor $\mPhi$ and one query image $\mX^*\!$ with its image descriptor $\mPhi^*$. In general, we use `$^*\!$' to indicate query-related variables. Moreover, each of the above descriptors belong to one of $L$ classes in the subset $\{c_1,\cdots,c_L\}\!\subset\!\idx{C}$ that forms so-called $L$-way learning problem and the class subset $\{c_1,\cdots,c_L\}$ is chosen at random from $\idx{C}\!\equiv\!\{1,\cdots,C\}$. Then, the $L$-way $1$-shot learning step can be defined as learning similarity:
\begin{equation}
SI(\mPhi,\mPhi^*\!;\tS)=s\left(\vartheta\!\left(\mPhi,\mPhi^*\!\right); \tS\right),\label{eq:simsim}
\end{equation}
where $\tS$ denotes network parameters that have to be learnt.

For the $L$-way $Z$-shot problem, we assume some $W$ support images $\{\mX_n\}_{n\in\mathcal{W}}$ from some set $\mathcal{W}$ and their corresponding image descriptors $\{\mPhi_n\}_{n\in\mathcal{W}}$ which can be considered as a $Z$-shot descriptor if stacked along the third mode. Moreover, we assume one query image $\mX^*\!$ with its image descriptor $\mPhi^*$. Again, both the $Z$-shot and the query descriptors belong to one of $L$ classes in the subset $\mathcal{C}^{\ddag}\!\equiv\!\{c_1,\cdots,c_L\}\!\subset\!\idx{C}\!\equiv\!\mathcal{C}$. Then, the $L$-way $Z$-shot learning step can be defined as learning similarity:
\begin{equation}
SI(\{\mPhi_n\}_{n\in\mathcal{W}},\mPhi^*\!;\tS)=s\left(\vartheta\!\left(\{\mPhi_n\}_{n\in\mathcal{W}},\mPhi^*\!\right),\tS\right).
\end{equation}

Following approach \cite{sung2017learning}, we use the Mean Square Error (MSE) for the objective of our end-to-end SoSN method:
\begin{align}
&\!\!\!\!\!\!\!\!\!\!\!\!\argmin\limits_{\tF, \tS} \sum\limits_{c\in\mathcal{C}^{\ddag}}\sum\limits_{c'\in\mathcal{C}^{\ddag}} \!\left(SI\left(\{\mPhi_n\}_{n\in\mathcal{W}_c},\mPhi^*_{q\in\mathcal{Q}: \ell(q)=c'},\tS\right)\! - \delta\!\left(c\!-\!c'\right)\right)^2\!\!\!,\nonumber\\
&\qquad\text{ s.t. } \mPhi_n\!=\!f(\mX_n; \tF) \text{ and } \mPhi^*_q\!=\!f(\mX^*_q\!; \tF).
\end{align}
In the above eq., $\mathcal{W}_c$ is a randomly chosen set of $W$ support image descriptors of class $c\!\in\!\mathcal{C}^{\ddag}$, $\mathcal{Q}$ is a randomly chosen set of $L$ query image descriptors so that its consecutive elements belong to the consecutive classes in $\mathcal{C}^{\ddag}\!\equiv\!\{c_1,\cdots,c_L\}$. Lastly, $\ell(q)$ corresponds to the label of $q\!\in\!\mathcal{Q}$.

\subsection{Relationship Descriptor/Operator $\vartheta$}

Below we offer some choices for the operator $\vartheta$ which role is to capture/summarize the information held in the support/query image representations. Such a summary is then passed to the similarity network $s$ for learning comparison.

Table \ref{tab_methods} details variants of the relationship operator $\vpsi=\vartheta\!\left(\{\mPhi_n\}_{n\in\mathcal{W}},\mPhi^*\!\right)$. 
\comment{
\begin{table*}[t]
\centering
{
\fontsize{8.5}{9}\selectfont
\begin{tabular}{c|c}
\hline
Operator & $\vpsi\!=\!\vartheta\!\left(\{\mPhi_n\}_{n\in\mathcal{W}},\mPhi^*\!\right)$\\
\hline
\begin{minipage}{2.5cm}Auto-correlation\\Full ($\otimes$+F)\end{minipage} & 
{\begin{minipage}{9.5cm}\begin{equation}\!\!\!\!\!\!\left[\tG\left(\frac{1}{N}\barbar{\mPhi}\barbar{\mPhi}^T\right)\right]_{(:)}\!\!\!\!\text{where } \barbar{\mPhi}\!=\!\big[\left[\mPhi_1;_1\mPhi^*\!\right];_2\cdots;_2\left[\mPhi_W;_1\mPhi^*\!\right]\big]
\label{eq:concat_f}
\end{equation}\end{minipage}}\\
\begin{minipage}{2.5cm}Auto-corr., rank.\\diff., concat. ($\otimes$+R)\end{minipage} & 
{\begin{minipage}{9.5cm}\begin{equation}\left[\tG\left(\frac{1}{NZ}\mPhi\mPhi^T\right)\!;_3 \tG\left(\frac{1}{N}\mPhi^*\!\mPhi^{*T}\right)\right]_{(:)}
\label{eq:concat_r}
\end{equation}\end{minipage}}\\
\begin{minipage}{2.5cm}Auto-corr., aver.\\shots $\bar{\mPhi}$, concat. ($\otimes$)\end{minipage} & 
{\begin{minipage}{9.5cm}\begin{equation}\!\!\!\!\!\!\left[\tG\left(\frac{1}{N}\bar{\mPhi}\bar{\mPhi}^T\!\right)\!;_3 \tG\!\left(\frac{1}{N}\mPhi^*\!\mPhi^{*T}\right)\right]_{(:)}\!\!\!\text{where }\; \bar{\mPhi}\!=\!\frac{1}{W}\!\sum_{n\in\mathcal{W}}\!\mPhi_n,\; N\!=\!W\!H
\label{eq:concat_best}
\end{equation}\end{minipage}}\\
\hline
\comment{\begin{minipage}{2.5cm}Auto-corr. concat. \\ or-like op. ($\otimes$\&$\oplus$)\end{minipage} & 
{\begin{minipage}{9.5cm}\begin{equation}\!\!\left[\tG\left(\frac{1}{N}\bar{\mPhi}\bar{\mPhi}^T\!\right)\!;_3\alpha\tG\left(\frac{1}{N}\bar{\mPhi}\!\oplus\!\bar{\mPhi}^T\!\right)\!;_3 \tG\!\left(\frac{1}{N}\mPhi^*\!\mPhi^{*T}\right)\!;_3 \alpha\tG\!\left(\frac{1}{N}\mPhi^*\!\!\!\oplus\!\mPhi^{*T}\right)\right]_{(:)}\!\!\!\!\!\!\!\!\!\label{eq:oplus}
\end{equation}\end{minipage}}\\}
\comment{\begin{minipage}{2.5cm}Auto-corr. concat.\\ differ. op. ($\otimes$\&$\ominus$)\end{minipage} & 
{\begin{minipage}{9.5cm}\begin{equation}\!\!\left[\tG\left(\frac{1}{N}\bar{\mPhi}\bar{\mPhi}^T\!\right)\!;_3\alpha\tG\left(\frac{1}{N}\bar{\mPhi}\!\ominus\!\bar{\mPhi}^T\!\right)\!;_3 \tG\!\left(\frac{1}{N}\mPhi^*\!\mPhi^{*T}\right)\!;_3 \alpha\tG\!\left(\frac{1}{N}\mPhi^*\!\!\!\ominus\!\mPhi^{*T}\right)\right]_{(:)}\!\!\!\!\!\!\!\!\!\label{eq:ominus}
\end{equation}\end{minipage}}\\}
%
%
%
\comment{\begin{minipage}{2.5cm}Auto-corr. comp. by$\!$\\ Hadamart ($\otimes$--$\odot$)\end{minipage} & 
{\begin{minipage}{9.5cm}\begin{equation}\!\!\!\!\!\!\left[\tG\left(\frac{1}{N}\bar{\mPhi}\bar{\mPhi}^T\!\!\odot\!\frac{1}{N}\mPhi^*\!\mPhi^{*T}\right)\right]_{(:)}
\label{eq:cosine}
\end{equation}\end{minipage}}\\}
\comment{\begin{minipage}{2.5cm}Auto-corr. comp. by$\!$\\ differ. op ($\otimes$--$\ominus$)\end{minipage} & 
{\begin{minipage}{9.5cm}\begin{equation}\!\!\!\!\!\!\left[\tG\left(\frac{1}{N}\bar{\mPhi}\bar{\mPhi}^T\!\!-\!\frac{1}{N}\mPhi^*\!\mPhi^{*T}\right)\right]_{(:)}
\label{eq:basic_diff}
\end{equation}\end{minipage}}\\}
\hline
\end{tabular}
}
\caption{List of the relationship descriptors/operators $\vartheta$ we propose and evaluate in our paper.}\label{tab_methods}
\vspace{-0.3cm}
\end{table*}
}
\begin{table*}[t]
\vspace{-0.3cm}
\centering
{
\fontsize{8.5}{9}\selectfont
\begin{tabular}[t]{c|c|c|c}
%
Operator &\begin{minipage}{3.5cm}Auto-correlation Full ($\otimes$+F)\end{minipage} & \begin{minipage}{4.5cm}Auto-corr., rank. diff., concat. ($\otimes$+R)\end{minipage} & \begin{minipage}{4.5cm}Auto-corr., aver. shots $\bar{\mPhi}$, concat. ($\otimes$)\end{minipage} \\
\hline
$\vpsi\!=\!\vartheta\!\left(\cdots\right)$
&{\begin{minipage}{4.5cm}
\begin{equation}
\hspace{-1.6cm}
\left[\tG\left(\frac{1}{N}\barbar{\mPhi}\barbar{\mPhi}^T\right)\right]_{(:)}\!\!\!\!\text{where}\label{eq:concat_f}
\end{equation}
\vspace{-0.2cm}
\begin{equation}\barbar{\mPhi}\!=\!\big[\left[\mPhi_1;_1\mPhi^*\!\right];_2\cdots;_2\left[\mPhi_W;_1\mPhi^*\!\right]\big]\nonumber
\end{equation}
\end{minipage}
} &
{\begin{minipage}{5.1cm}
\vspace{-0.5cm}
\begin{equation}
\hspace{-1.2cm}
\left[\tG\left(\frac{1}{NZ}\mPhi\mPhi^T\right)\!;_3 \tG\left(\frac{1}{N}\mPhi^*\!\mPhi^{*T}\right)\right]_{(:)}\!\!\!\!\!\!\!\!
\label{eq:concat_r}
\end{equation}
\end{minipage}}
&
{\begin{minipage}{5.0cm}
\vspace{0.2cm}
\begin{equation}
\hspace{-1.8cm}
\left[\tG\left(\frac{1}{N}\bar{\mPhi}\bar{\mPhi}^T\!\right)\!;_3 \tG\!\left(\frac{1}{N}\mPhi^*\!\mPhi^{*T}\right)\right]_{(:)}\!\!\!\!\!\!\!\!\!\!\!\!\!\!\!\!\!\!\!\!\label{eq:concat_best}
\end{equation}
\vspace{-0.4cm}
\begin{equation}
\hspace{-1.1cm}
\text{where }\bar{\mPhi}\!=\!\frac{1}{W}\!\sum_{n\in\mathcal{W}}\!\mPhi_n,\; N\!=\!W\!H\nonumber
\end{equation}
\end{minipage}}\\
\hline
\end{tabular}
}
\caption{List of the relationship descriptors/operators $\vartheta$ we propose and evaluate in our paper.}\label{tab_methods}
\vspace{-0.3cm}
\end{table*}
%
%
%
%
%
Operator ($\otimes$+F) concatenates the support and query feature vectors prior to the outer-product. Operator ($\otimes$+R) performs the outer-product on the support and query feature vectors separately prior to concatenation. Operator ($\otimes$) first averages over $Z$ support images (conv. feat. maps) per class followed by the outer-product on the mean support and query vectors and concatenation. 
Operator $\tG(\mM)$ performs Power Normalization such as AsinhE or SigmE pooling explained earlier. Lastly, $0\!\leq\!\alpha\!\leq\!1$ balances the contributions of various statistics we concatenate.





\subsection{Multiple co-occurrence matrices}
\label{sec:perm}
As our similarity learning network takes as input co-occurrence matrices stacked along the third-mode, the CNN filters can model only local relations. While there exist spatially local correlations in images (and their feature maps), correlations in matrices are not localized. Thus, some loss of performance is expected. To overcome this, we propose to generate a few of permutations of a second-order matrix that are passed to similarity learning streams. Figure \ref{fig:pipe2} shows this operation. Thus, the CNN filters of each stream can capture and compare different co-occurrence patterns as shown in Figure \ref{fig:pipe3}. Thus, we re-define Eq. \ref{eq:simsim} as follows:
\begin{align}
&SI(\mPhi,\mPhi^*\!;\{\mPi\}_{p\in\idx{P}}, \{\tS\}_{p\in\idx{P}})=\label{eq:simsim2}\\
&\qquad s\left(\left[\mPi_p^T\!\vartheta_q\!\left(\mPhi,\mPhi^*\right)\!\mPi_p\right]_{;_3, \substack{p\in\idx{P}\\q\in\idx{Q}}}; \{\tS\}_{p\in\idx{P}}\right),\nonumber
\end{align}
where $\{\mPi\}_{p\in\idx{P}}$ and $\{\tS\}_{p\in\idx{P}}$ are sets of $P$ permutation matrices ($\mPi_p^T\!\mPi_p\!=\!\mIdent$, $\mPi_p\!\in\!\{0,1\}^{K\!\times\!K}$, $\forall p\!\in\!\idx{P}$, and $\mPi_p\!\neq\!\mPi_q$ if $p\!\neq\!q$) and sim. net. filters. We assume for simplicity that $s(\mPsi, \{\tS\}_{p\in\idx{P}})$ operates on $\mPsi\!\in\!\mbr{K\!\times\!K\!\times\!Q\!\times\!P}$ rather than a vector of size $K'\!\!=\!K^2QP$, $\vartheta_q$ are slices of $\vartheta\!\in\!\mbr{K\!\times\!K\!\times\!Q}$ (not vect.) and $[\cdot]_{;_3}$ is concat. in third mode over matrices. 

\comment{
\subsection{Deriving/Understanding Pooling via Sigmoid}
\label{sec:pool_der}

Proposition \ref{pr:cooc} and Remark \ref{re:maxexp} state that quantity $1\!-\!(1\!-\!p)^N$ is the probability of at least one co-occurrence being detected in the pool of the $N$ i.i.d. trials performed according to the Bernoulli distribution with the success probability $p$ of event $(\phi_{n}\!\cap\!\phi'_{n}\!=\!1)$ for event vectors in $\vphi,\vphi'\!\!\in\!\{0,1\}^{N}$. Below we extend this theory to negative co-occurrences which we interpret as two anti-correlating visual words.

\begin{proposition}
\label{pr:cooc_anti}
Assume an event vector $\vphi^*\!\!\in\!\{0,1,-1\}^{N}$ constructed from event vectors $\vphi,\vphi'\!\!\in\!\{0,1,-1\}^{N}$, which stores the $N$ trials performed according to the Multinomial distribution under i.i.d. assumption, 
 for which the probability $p$ of an event $(\phi^*\!\!=\!\phi_{n}\!\cdot\phi'_{n}\!=\!1)$ denotes a co-occurrence, the probability  $q$ of an event  $(\phi^*\!\!=\phi_{n}\!\cdot\phi'_{n}\!=\!-1)$ denotes a negatively-correlating co-occurrence, and $1\!-\!p\!-\!q$, for $(\phi^*\!\!=\phi_{n}\!\cdot\phi'_{n}\!=0)$ denotes the lack of the first two events, and $p$ is estimated as an expected value $p\!=\!\avg_n\phi^*_n$. Then the probability of at least one co-occurrence event $(\phi^*\!\!=\!1)$ minus the probability of at least one negatively-correlating co-occurrence event $(\phi^*\!\!=\!-1)$ in $N$ trials, encoded in $\phi^*_n$, becomes:
\begin{equation}
\label{eq:my_maxexp3}
\psi\!=\!(1\!-\!q)^{N}\!-\!(1\!-\!p)^{N}.
\end{equation}
\end{proposition}
\begin{proof}
One can derive the above difference of probabilities by directly applying the Multinomial calculus as follows:
\begin{align}
& \!\!\!\!\!\!\!\textstyle\sum\limits_{n=1}^{N}\sum\limits_{n'=0}^{N\!-n}\!\!\binom{N}{n,n'\!,N\!-n-n'\!-n''\!}\left(p^nq^{n'\!}\!-\!p^{n'\!}q^{n}\right)(1\!\!-\!\!p\!\!-\!\!q)^{N\!-n-n'}\!.
\label{eq:my_maxexppr2}
\end{align}
One can verify algebraically/numerically that Eq. \eqref{eq:my_maxexppr2} and \eqref{eq:my_maxexp3} are equivalent.
\end{proof}

\ifdefined\arxiv
\newcommand{\PowH}{3.0cm}
\newcommand{\PowHB}{2.875cm}
\newcommand{\PowW}{3.65cm}
\else
\newcommand{\PowH}{3.2cm}
\newcommand{\PowHB}{3.4cm}
\newcommand{\PowW}{4.5cm}
\fi

\begin{figure*}[b]
\centering
\vspace{-0.2cm}
\hspace{-0.45cm}
\begin{subfigure}[t]{0.32\linewidth}
\centering\includegraphics[trim=0 0 0 0, clip=true, height=\PowH]{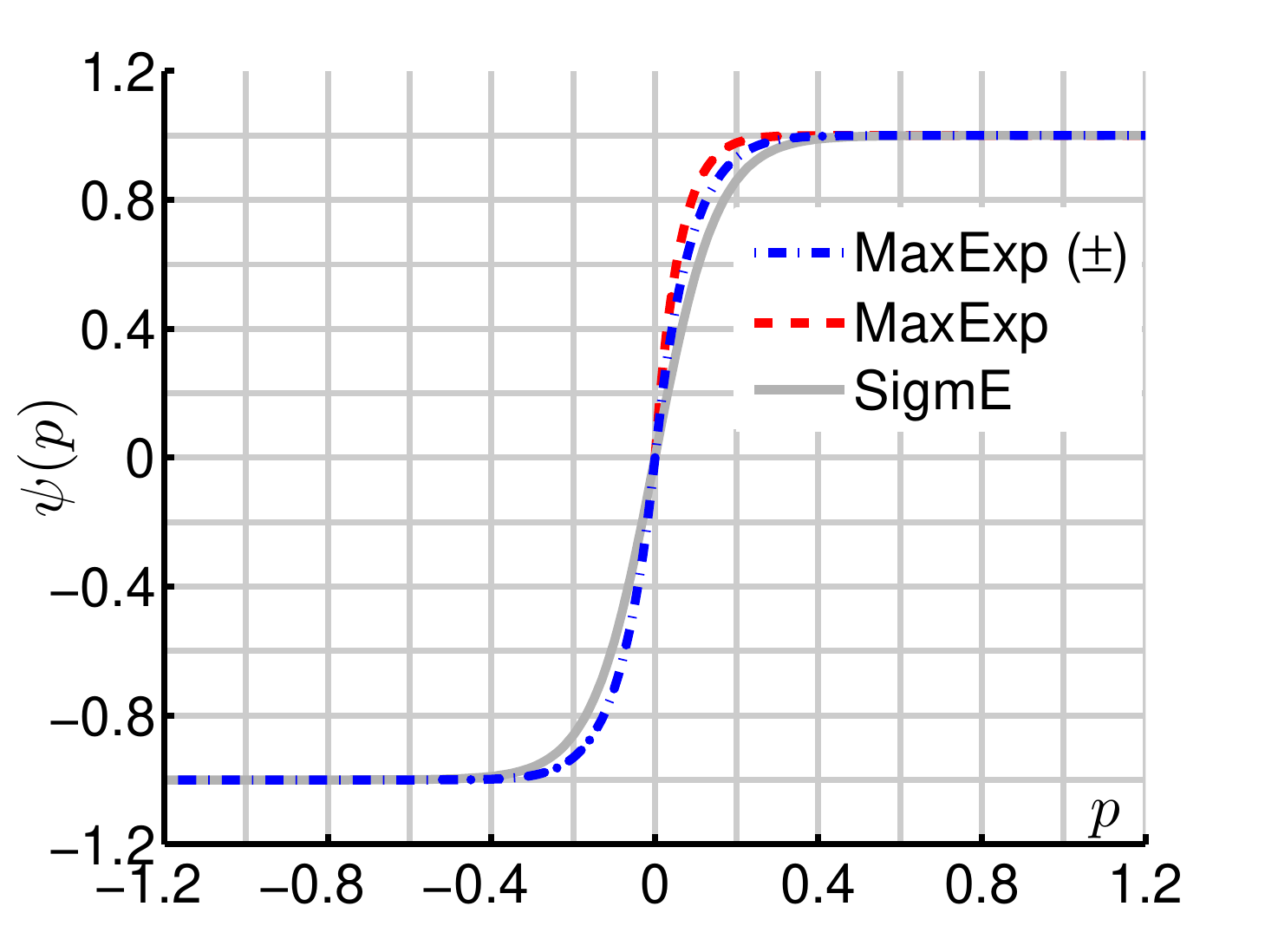}\vspace{-0.2cm}
\caption{\label{fig:pow1}}
\end{subfigure}
\begin{subfigure}[t]{0.32\linewidth}
\centering\includegraphics[trim=0 0 0 0, clip=true, height=\PowH]{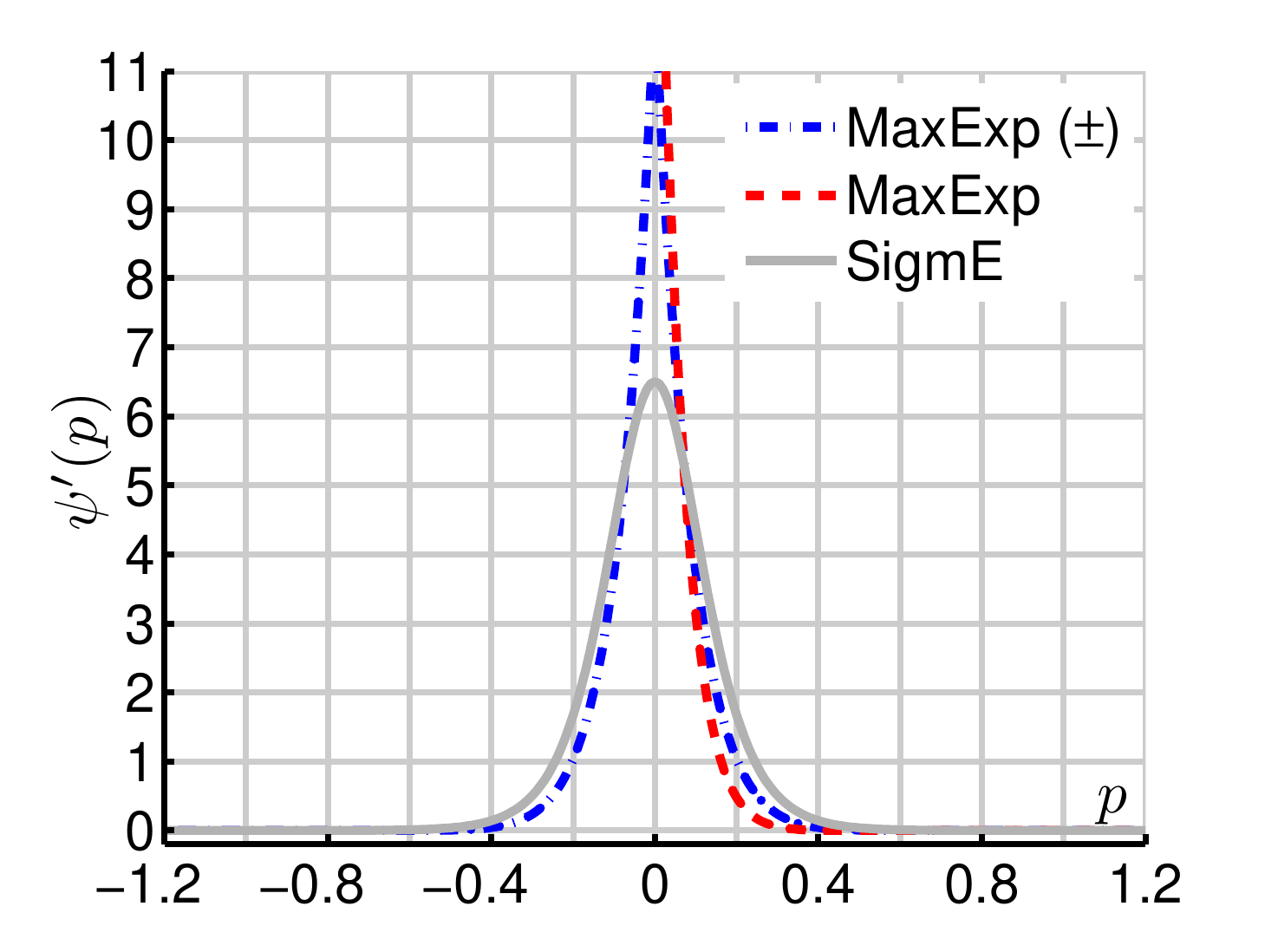}\vspace{-0.2cm}
\caption{\label{fig:pow2}}
\end{subfigure}
\begin{subfigure}[t]{0.32\linewidth}
\centering\includegraphics[trim=0 0 0 0, clip=true, height=\PowH]{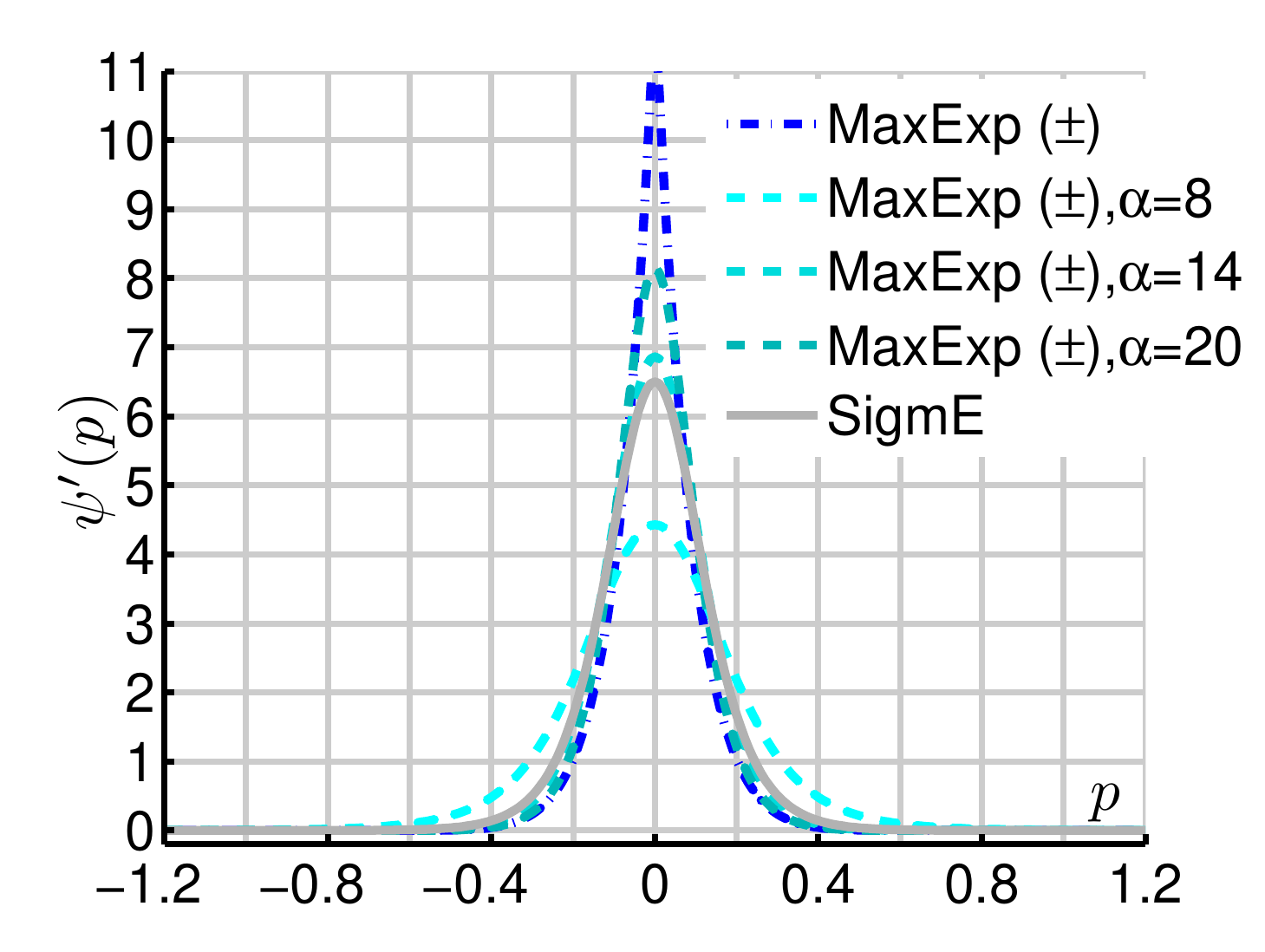}\vspace{-0.2cm}
\caption{\label{fig:pow3}}
\end{subfigure}


\vspace{-0.2cm}
\caption{MaxExp$\,$($\pm$), MaxExp, and SigmE are illustrated in Figure \ref{fig:pow1}. Note that MaxExp$\,$($\pm$) extends MaxExp to negative values and is closely approximated by SigmE. Derivatives of MaxExp$\,$($\pm$), MaxExp, and SigmE are in Figure \ref{fig:pow2} while derivatives of MaxExp$\,$($\pm$) with the soft maximum are demonstrated to be smooth in Figure \ref{fig:pow3}.}
\vspace{-0.3cm}
\label{fig:power-norms}
\end{figure*}

\begin{remark}
\label{re:pos_neg_pool}
If we decide from the beginning that the $N$ trials contain only either co-occurrences with the event probability $p\!=\!\rho p'\!$ or negatively-correlating co-occurrences with the event probability $q\!=\!(1\!-\!\rho)q'\!$, we can set $q\!:=\!(1\!-\!\rho)\max(0,-p)$ and $p\!:=\!\rho\max(0,p)$, which leads to:
\begin{align}
& \psi\!=\!\left(1\!-\!(1\!-\!\rho)\max(0,-p)\right)^N \!\!-\left(1\!-\!\rho\max(0,p)\right)^N\!\!\!,
\label{eq:pos_neg_pool2}
\end{align}
where $0\!\leq\rho\!\leq\!1$, $-1\!\leq\!p\!\leq\!1$, while $\rho p\!\geq\!0$ and $(1\!-\!\rho) p\!<\!0$ encode the probability of event of co-occurrence and negatively-correlating co-occurrence, respectively.
\end{remark}
\begin{remark}
\label{re:pos_neg_pool2}
In practice, we trace-normalize $\mM$, use reg. $\lambda\!\approx\!1e\!-\!6$ and $0\!<\!\eta\!\approx\!N$ following Remarks \ref{re:pnpn} and \ref{re:maxexp}:
\begin{align}
& \!\!\!\!\mPsi\!=\!\mygthree{\,\mM,\eta\,}\!=\!\left(1-(1\!-\!\rho)\max(0,\frac{-\mM}{\trace(\mM)\!+\!\lambda})\right)^\eta \!\!\nonumber\\
&\qquad\qquad\qquad\quad-\left(1-\rho\max(0,\frac{\mM}{\trace(\mM)\!+\!\lambda})\right)^\eta\!\!\!.
\label{eq:pos_neg_pool3}
\end{align}
We also apply the soft maximum funct. $\max(\mX,\mY; \alpha)=\frac{1}{\alpha}\log\left(\exp(\alpha \mX)+\exp(\alpha \mY)\right)$ which role is to make derivatives of Eq. \eqref{eq:pos_neg_pool2} and \eqref{eq:pos_neg_pool3} smooth. Lastly, $\alpha$ controls the softness of the max function and $\alpha\!\gg\!\eta$. We call the above operator as MaxExp$\,$({$\pm$}) 
in contrast to MaxExp in Eq. \eqref{eq:my_maxexp3}.
\end{remark}

Figure \ref{fig:power-norms} demonstrates that Eq. \eqref{eq:pos_neg_pool2} and \eqref{eq:pos_neg_pool3} are closely approximated by SigmE from Eq. \eqref{eq:sigmoid} for $\rho\!=\!0.5$. Moreover, derivative of MaxExp$\,$({$\pm$}) with soft maximum is smooth which is essential in back-propagation (non-smooth objectives do not guarantee convergence of the majority of optimization algorithms). As SigmE approximates closely our MaxExp$\,$({$\pm$}), we use SigmE in our experiments. We assert that our relationship descriptors/operators $\vartheta$ benefit from Power Normalization which, according to Prop. \ref{pr:cooc_anti}, can detect co-occurring responses of CNN filters and discards quantities of such responses which correlate with repeatable visual stimuli (\eg~variable areas of textures) which otherwise would introduce nuisance variability into representations. This nuisance variability would be further amplified when learning \eg~pair-wise relations as pairs of images would introduce nuisance quantities $\varepsilon$ and $\varepsilon^*\!$ that would result in a total nuisance variability of $\varepsilon\varepsilon^*\!\!\gg\!\max(\varepsilon,\varepsilon^*\!)$.

%
}
\section{Experiments}
\label{sec:exp}
Below we demonstrate usefulness of our proposed relationship descriptors. Our network is evaluated on the Omniglot \cite{lake_oneshot} and \textit{mini}Imagenet \cite{vinyals2016matching} datasets, as well as a recently proposed Open MIC dataset \cite{me_museum}, in the one- and few-shot learning scenario. 
%
We use images augmented by random rotations of 90, 180 and 270 degrees. We train with Adam solver. The network architecture of our SoSN model is shown in Figure \ref{fig:blocks}.  
The results are compared against several state-of-the-art methods for one- and few-shot learning.

\subsection{Datasets}
Below, we describe our  setup, datasets and evaluations.

\vspace{0.05cm}
\noindent\textbf{Omniglot } \cite{lake_oneshot} consists of 1623 characters (classes) from 50 alphabets. Samples in each class are drawn by 20 different people. 
The dataset is split into 1200 classes for training and 423 classes for testing. All images are resized to $28\!\times\!28$.

\begin{figure}[t]
\vspace{-0.1cm}
	\centering
	\includegraphics[height=2.5cm]{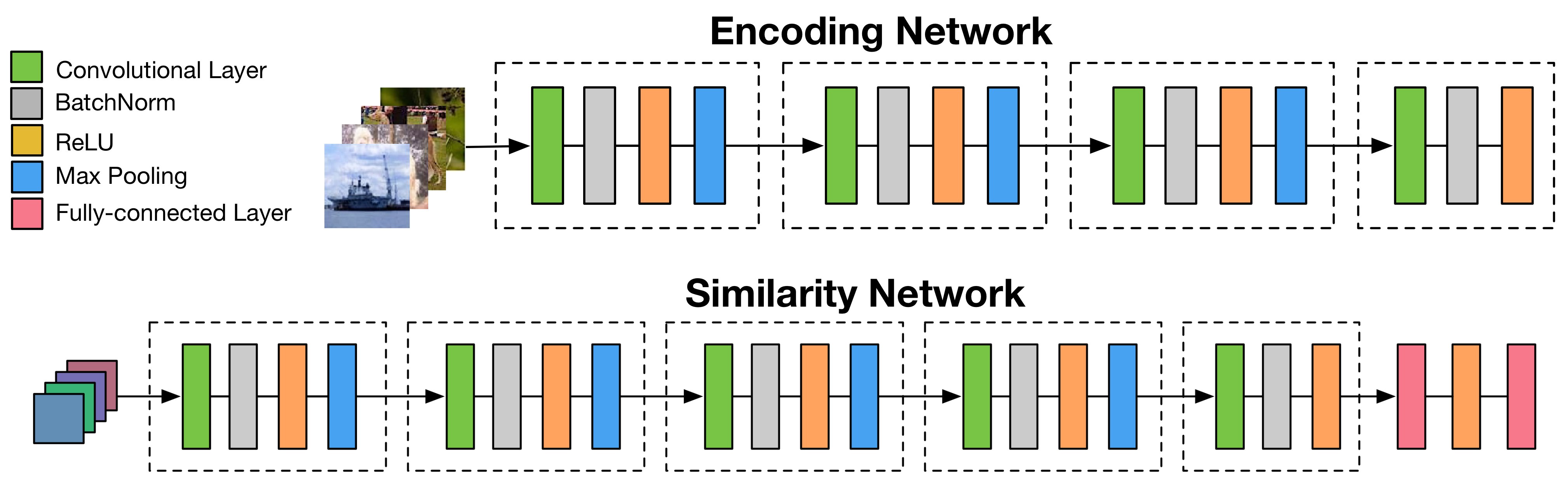}
    \vspace{-0.3cm}
	\caption{The network architecture used in our SoSN model.}
	\label{fig:blocks}
    \vspace{-0.3cm}
\end{figure}

\noindent\textbf{\textit{mini}Imagenet} \cite{vinyals2016matching} consists of 60000 RGB images from 100 classes. 
We follow the standard protocol \cite{vinyals2016matching} and use 64 classes for training, 16 classes for validation and remaining 20 classes for testing, and  use images of size $84\!\times\!84$. 
We also investigate larger sizes, \eg~$224\!\times\!224$, as our SoSN model can use richer spatial information from larger images to obtain high-rank auto-correlation matrices without a need to modify the similarity network to larger feature maps.

In contrast, increasing image size in  Relation Net \cite{sung2017learning} results in a large number of extra convolutional blocks need to to be added to the image descriptor and/or similarity network to deal with the increase in the spatial dimension.

\noindent\textbf{Open MIC}, which stands for the Open Museum Identification Challenge (Open MIC) dataset \cite{me_museum}, is a recently proposed dataset which contains photos of various exhibits, \eg~paintings, timepieces, sculptures, glassware, relics, science exhibits, natural history pieces, ceramics, pottery, tools and indigenous crafts, captured from 10 museum exhibition spaces according to which this dataset is divided into 10 subproblems. In total, it has 866 diverse classes and 1--20 images per class. The within-class images undergo various geometric and photometric distortions as the data was captured with wearable cameras. This makes Open MIC a perfect candidate for testing one-shot learning algorithms. We combine ({\em shn+hon+clv}), ({\em clk+gls+scl}), ({\em sci+nat}) and ({\em shx+rlc}) into subproblems {\em p1}, $\!\cdots$, {\em p4}. Thus, we form 12 possible pairs in which subproblem $x$ is used for training and $y$ for testing (x$\rightarrow$y).
%
%
We evaluate the Relation Net \cite{sung2017learning} as a baseline and compare our SoSN model against it. 

\subsection{Experimental setup}

For the Omniglot dataset, 
follow the setup in \cite{sung2017learning} for which 5-way 1-shot problem contains 1 support and 19 query images, 5-way 5-shot problem contains 5 support and 10 query images, 20-way 1-shot problem contains 1 support and 10 query images, and 20-way 5-shot problem contains 1 support and 5 query images for each sampled class per episode. We sample 1 and 5 query images per class for 5-way 1-shot and 5-shot experiments, and 1 and 2 query image per class for 20-way 1-shot and 5-shot setup.

For the {\em mini}Imagenet dataset, we follow the standard 5-way 1-shot and 5-way 5-shot protocols. For every training and testing episode, we randomly select 5 and 3 query samples per class. We average over 600 episodes to obtain the performance of our SoSN model. The initial learning rate is set to $1e\!-\!3$. We train the model with $200000$ episodes.

For the Open MIC dataset, we enumerate all possible training--testing pairs from the 4 combined subproblems. For every selected subproblem, we performed the mean extraction. Then we center-cropped each image with ratio 0.875 and resized all images to $84\!\times\!84$. As some classes of Open MIC contain less than 5 images, we apply the 5-way 1-shot learning protocol. During training, to form an episode, we select 1 image for the support set and another 2 images for the query set from each class. During testing, we use the same number of support and query samples in every episode and compute the accuracy over 1000 episodes for the final score. The initial learning rate is set to $1e\!-\!4$. We use $50000$ episodes to train the models.

\begin{table*}[t]
\vspace{-0.3cm}
\centering
\caption{Evaluations on Omniglot. Refer to \cite{sung2017learning} for references to baselines listed in this table.}
\label{table1}
\vspace{-0.2cm}
\makebox[\textwidth]{
\setlength{\tabcolsep}{0.10em}
\renewcommand{\arraystretch}{0.70}
\begin{tabular}{l|c|c|c|c|c}
Model & Fine & \multicolumn{2}{c}{5-way Acc.} & \multicolumn{2}{c}{20-way Acc.} \\ 
& Tune & 1-shot & 5-shot & 1-shot  & 5-shot  \\ \hline
\textit{Matching Nets} & Y & $97.9\%$ & $98.7\%$ & $93.5\%$ & $98.7\%$ \\
\textit{Siamese Nets with Memory} & N & $98.4\%$ & $99.6\%$ & $95.0\%$ & $98.6\%$ \\
\textit{Meta Nets} & N & $99.0\%$ & - & $97.0\%$ & - \\ 
\textit{Prototypical Net} & N & 99.8\% & 99.7\% & 96.0\% & 98.9\% \\ 
\textit{Relation Net} & N & $99.6 \pm 0.2\%$ & $99.8 \pm 0.1\%$ & $97.6 \pm 0.2\%$ & $99.1 \pm 0.1\%$ \\ 
\hline
\textit{SoSN($\otimes$)+AsinhE}, Eq. \eqref{eq:concat_best}+Rem. \ref{re:asinhe} & N & $99.7 \pm 0.2\%$ & $99.9 \pm 0.1\%$ & $98.2 \pm 0.2\%$ & $99.3 \pm 0.1\%$ \\
\textit{SoSN($\otimes$)+SigmE}, $\;$Eq. \eqref{eq:concat_best}+Rem. \ref{re:pnpn} & N & ${\bf 99.8 \pm 0.1\%}$ & ${\bf 99.9 \pm 0.1\%}$ & ${\bf 98.3 \pm 0.2\%}$ & ${\bf 99.4 \pm 0.1\%}$ \\ 
\hline
\end{tabular}}
\end{table*}

\subsection{Results}
Below we validate the effectiveness of our SoSN model and compare it against the state-of-the-art approaches.

\noindent\textbf{Omniglot.}  The evaluation results are shown in Table \ref{table1}. As shown in the table, despite the performance on the Omniglot dataset is saturated, our network can still outperform other few-shot learning methods on all evaluation protocols. Due to saturation, we use Omniglot for the basic sanity checks.
\begin{table*}[t]
\centering
\caption{Evaluations on the \textit{mini}Imagenet dataset. Refer to \cite{sung2017learning} for references to baselines.}
\label{table2}
\vspace{-0.2cm}
\makebox[\textwidth]{
\setlength{\tabcolsep}{0.10em}
\renewcommand{\arraystretch}{0.70}
\begin{tabular}{l|c|c|c}
Model & Fine Tune & \multicolumn{2}{c}{5-way Acc.} \\ 
& & 1-shot & 5-shot \\ \hline
\textit{Matching Nets} & N & $43.56 \pm 0.84\%$ & $55.31 \pm 0.73\%$  \\
\textit{Meta-Learn Nets} & N & $43.44 \pm 0.77\%$ & $60.60 \pm 0.71\%$ \\
\textit{Prototypical Net} & N & $49.42 \pm 0.78\%$ & $68.20 \pm 0.66\%$ \\ 
\textit{MAML} & Y & $48.70 \pm 1.84\%$ & $63.11 \pm 0.92\%$ \\ 
\textit{Relation Net} & N & $50.44 \pm 0.82\%$ & $65.32 \pm 0.70\%$  \\ 
\hline
\textit{SoSN($\otimes$+F)+SigmE}, $\;\;\;$Eq. \eqref{eq:concat_f} & N & $50.57 \pm 0.84\%$ & 65.91 $\pm$ 0.71\%  \\
\textit{SoSN($\otimes$+R)+SigmE}, $\;\;\;$Eq. \eqref{eq:concat_r} & N & $52.96\pm 0.83\%$ & $68.58 \pm 0.70\%$  \\ 
\textit{SoSN($\otimes$)} (no Power Norm.), Eq. \eqref{eq:concat_best}& N & $50.88 \pm 0.85\%$ & 66.71 $\pm$ 0.67\%  \\
\textit{SoSN($\otimes$)+AsinhE}, $\;$Eq. \eqref{eq:concat_best}+Remark \ref{re:asinhe} & N & $52.10 \pm 0.82\%$ & $67.79 \pm 0.69\%$ \\
\textit{SoSN($\otimes$)+SigmE}, $\;\;$Eq. \eqref{eq:concat_best}+Remark \ref{re:pnpn} & N & ${\bf 52.96\pm 0.83\%}$ & ${\bf68.63 \pm 0.68\%}$  \\ 
%
%
\hline
\textit{SoSN($\otimes$)+Permtation+SigmE} & N & ${\bf 54.52\pm 0.86\%}$ & $ 68.52\pm 0.69\% $\\
\textit{SoSN($\otimes$)+Permtation+SigmE+Multiple-similarity Net.} & N & ${\bf 54.72\pm 0.89\%}$ & ${\bf 68.67\pm 0.67\%}$\\
\hline
\textit{SoSN($\otimes$)+SigmE} (224$\times$224) & N & ${\bf 58.41 \pm 0.87\%}$ & ${\bf 72.96 \pm 0.67\%}$  \\
\hline
\textit{SoSN($\otimes$)+Permtation+SigmE} (224$\times$224) & N & ${\bf 58.80\pm 0.90\%}$ & $ {\bf 73.11\pm 0.71\%} $\\
\textit{SoSN($\otimes$)+Permtation+SigmE} (224$\times$224)\textit{+Mult.-sim. Net.} & N & ${\bf 59.22\pm 0.91\%}$ & ${\bf 73.24\pm 0.69\%}$\\

\hline
\end{tabular}}
\vspace{-0.3cm}
\end{table*}


\comment{
\begin{table*}[b]
\vspace{-0.2cm}
\centering
\caption{Evaluations on the Open MIC dataset.}
\label{table5}
\makebox[\textwidth]{\begin{tabular}{l|cc|cc|cc}
Model & \multicolumn{2}{c}{\textit{Relation Net}} & \multicolumn{2}{c}{\textit{SoSN($\otimes$)}}& \multicolumn{2}{c}{\textit{SoSN($\otimes$)+SigmE}} \\ \hline
& 5-way 1-shot & 20-way 1-shot & 5-way 1-shot & 20-way 1-shot & 5-way 1-shot & 20-way 1-shot \\ \hline
$clv2gls$ & $55.22 \pm 1.04\%$ & $21.68 \pm 0.40\%$ & $67.35 \pm 1.08\%$ & $33.11 \pm 0.51\%$ & ${\bf 70.40 \pm 1.36\%}$ & $38.53 \pm 0.50\%$ \\
$clv2nat$ & $36.23 \pm 0.98\%$ & $13.96 \pm 0.29\%$ & $60.23 \pm 1.10\%$ & $32.74 \pm 0.47\%$ & ${\bf 62.12 \pm 1.30\%}$ & $36.42 \pm 0.46\%$\\
$gls2clv$ & $39.51 \pm 0.99\%$ & & $48.48 \pm 1.06\%$ & & ${\bf 50.13 \pm 1.36\%}$ \\
$gls2nat$ & $42.84 \pm 1.02\%$ & & $58.35 \pm 1.09\%$& & ${\bf 60.25 \pm 1.34\%}$ \\
$nat2clv$ & $40.30 \pm 1.00\%$ & & $52.97 \pm 1.08\%$ & & ${\bf 54.93 \pm 1.37\%}$ \\
$nat2gls$ & $63.61 \pm 1.02\%$ & & $68.47 \pm 1.04\%$ & & ${\bf 72.97 \pm 1.23\%}$ \\
\hline
\end{tabular}}
Notation {\em x2y} means training on exhibition {\em x} and testing on exhibition {\em y}.
\vspace{-0.3cm}
\end{table*}
}
\comment{
\begin{table*}[b]
\vspace{-0.3cm}
\centering
\caption{Evaluations on the Open MIC dataset.}
\label{table5}
\makebox[\textwidth]{\begin{tabular}{l|cc|cc|cc}
Model & \multicolumn{2}{c}{\textit{Relation Net}} & \multicolumn{2}{c}{\textit{SoSN($\otimes$)}}& \multicolumn{2}{c}{\textit{SoSN($\otimes$)+SigmE}} \\
& 5-way 1-shot & 20-way 1-shot & 5-way 1-shot & 20-way 1-shot & 5-way 1-shot & 20-way 1-shot \\ \hline
$p1-p2$ & $71.08 \pm 0.99\%$ & $40.11 \pm 0.50\%$ & $80.79 \pm 0.92\%$ & $57.01 \pm 0.53\%$ & ${\bf 81.38 \pm 0.86\%}$ & ${\bf 59.35 \pm 0.55\%}$ \\
$p1-p3$ & $53.57 \pm 1.06\%$ & $30.38 \pm 0.47\%$ & $64.27 \pm 1.09\%$ & $42.32 \pm 0.52\%$ & ${\bf 65.16 \pm 1.07\%}$ & ${\bf 42.49 \pm 0.52\%}$\\
$p1-p4$ & $63.51 \pm 1.03\%$ & $41.23 \pm 0.49\%$ & $74.94 \pm 1.05\%$ & $54.19 \pm 0.54\%$ & ${\bf 75.14 \pm 0.98\%}$ & ${\bf 54.95 \pm 0.52\%}$\\ \hline
$p2-p1$ & $47.17 \pm 1.04\%$ & $23.48 \pm 0.43\%$ & $58.77 \pm 1.09\%$ & $34.69 \pm 0.48\%$ & ${\bf 60.30 \pm 1.09\%}$ & ${\bf 35.06 \pm 0.50\%}$\\
$p2-p3$ & $50.56 \pm 1.08\%$ & $26.38 \pm 0.46\%$ & $61.15 \pm 1.11\%$ & $37.01 \pm 0.52\%$ & ${\bf 62.07 \pm 1.08\%}$ & ${\bf 38.26 \pm 0.51\%}$\\
$p2-p4$ & $46.33 \pm 1.06\%$ & $24.76 \pm 0.44\%$ & $59.57 \pm 1.08\%$ & $34.95 \pm 0.49\%$ & ${\bf 60.88 \pm 1.10\%}$ & ${\bf 37.01 \pm 0.49\%}$\\ \hline
$p3-p1$ & $48.50 \pm 1.05\%$ & $26.19 \pm 0.44\%$ & $61.30 \pm 1.10\%$ & $35.99 \pm 0.50\%$ & ${\bf 61.49 \pm 1.07\%}$ & ${\bf 38.69 \pm 0.52\%}$ \\
$p3-p2$ & $49.68 \pm 1.08\%$ & $25.77 \pm 0.44\%$ & $80.82 \pm 0.90\%$ & $57.12 \pm 0.52\%$ & ${\bf 81.87 \pm 0.90\%}$ & ${\bf 57.92 \pm 0.56\%}$\\
$p3-p4$ & $68.39 \pm 1.03\%$ & $46.31 \pm 0.52\%$ & $77.23 \pm 0.99\%$ & $57.03 \pm 0.52\%$ & ${\bf 77.96 \pm 0.92\%}$ & ${\bf 59.38 \pm 0.51\%}$\\ \hline
$p4-p1$ & $45.48 \pm 1.03\%$ & $23.14 \pm 0.42\%$ & $58.22 \pm 1.07\%$ & $36.42 \pm 0.49\%$ & ${\bf 58.89 \pm 1.08\%}$ & ${\bf 37.36 \pm 0.51\%}$\\
$p4-p2$ & $70.32 \pm 1.00\%$ & $43.28 \pm 0.50\%$ & $80.07 \pm 0.93\%$ & $58.25 \pm 0.54\%$ & ${\bf 80.76 \pm 0.92\%}$ & ${\bf 58.97 \pm 0.54\%}$\\
$p4-p3$ & $50.77 \pm 1.06\%$ & $27.68 \pm 0.44\%$ & $61.56 \pm 1.14\%$ & $37.84 \pm 0.51\%$ & ${\bf 62.53 \pm 1.07\%}$ & ${\bf 38.60 \pm 0.50\%}$ \\
\hline
\end{tabular}}
p1: shn+hon+clv, p2: clk+gls+scl, p3: sci+nat, p4: shx+rlc. Notation {\em x-y} means training on exhibition {\em x} and testing on {\em y}.
\vspace{-0.2cm}
\end{table*}
}

{
\begin{table*}[b]
\vspace{-0.1cm}
\caption{Evaluations on the Open MIC dataset.}
\vspace{-0.2cm}
\label{table5}
\makebox[\textwidth]{
\hspace{-0.3cm}
\setlength{\tabcolsep}{0.10em}
\renewcommand{\arraystretch}{0.70}
\fontsize{8.5}{9}\selectfont
\begin{tabular}{l|c|c|c|c|c|c|c|c|c|c|c|c|c}
Model & $L$ & $p1\!\!\rightarrow\!p2$ & $p1\!\!\rightarrow\!p3$& $p1\!\!\rightarrow\!p4$& $p2\!\!\rightarrow\!p1$& $p2\!\!\rightarrow\!p3$ &$p2\!\!\rightarrow\!p4$& $p3\!\!\rightarrow\!p1$& $p3\!\!\rightarrow\!p2$& $p3\!\!\rightarrow\!p4$& $p4\!\!\rightarrow\!p1$& $p4\!\!\rightarrow\!p2$& $p4\!\!\rightarrow\!p3$\\
\hline
Relation Net & 5 & $71.1 \!\pm\! 1.0$ & $53.6 \!\pm\! 1.1$ & $63.5 \!\pm\! 1.0$ & $47.2 \!\pm\! 1.0$ & $50.6 \!\pm\! 1.1$ & $68.5 \!\pm\! 1.0$ & $48.5 \!\pm\! 1.1$ & $49.7 \!\pm\! 1.1$ & $68.4 \!\pm\! 1.0$ & $45.5 \!\pm\! 1.0$ & $70.3 \!\pm\! 1.0$ & $50.8 \!\pm\! 1.1$\\
Relation Net & 20 & $40.1 \!\pm\! 0.5$ & $30.4 \!\pm\! 0.5$ & $41.4 \!\pm\! 0.5$ & $23.5 \!\pm\! 0.4$ & $26.4 \!\pm\! 0.5$ & $38.6 \!\pm\! 0.5$ & $26.2 \!\pm\! 0.4$ & $25.8 \!\pm\! 0.4$ & $46.3 \!\pm\! 0.5$ & $23.1 \!\pm\! 0.4$ & $43.3 \!\pm\! 0.5$ & $27.7 \!\pm\! 0.4$\\ \hline
SoSN & 5 & $80.8 \!\pm\! 0.9$ & $64.3 \!\pm\! 1.1$ & $74.9 \!\pm\! 1.1$ & $58.8 \!\pm\! 1.1$ & $61.2 \!\pm\! 1.1$ & $76.9 \!\pm\! 0.9$ & $61.3 \!\pm\! 1.1$ & $80.8 \!\pm\! 0.9$ & $77.2 \!\pm\! 1.0$ & $58.2 \!\pm\! 1.1$ & $80.1 \!\pm\! 0.9$ & $61.6 \!\pm\! 1.1$\\
SoSN & 20 & $57.0 \!\pm\! 0.5$ & $42.3 \!\pm\! 0.5$ & $54.2 \!\pm\! 0.5$ & $34.7 \!\pm\! 0.5$ & $37.0 \!\pm\! 0.5$ & $54.8 \!\pm\! 0.5$ & $36.0 \!\pm\! 0.5$ & $57.1 \!\pm\! 0.5$ & $57.0 \!\pm\! 0.5$ & $36.4 \!\pm\! 0.5$ & $59.3 \!\pm\! 0.9$ & $37.8 \!\pm\! 0.5$\\ \hline
SoSN+SigmE & 5 & $\mathbf{81.4} \!\pm\! 0.9$ & $\mathbf{65.2} \!\pm\! 1.1$ & $\mathbf{75.1} \!\pm\! 1.0$ & $\mathbf{60.3} \!\pm\! 1.1$ & $\mathbf{62.1} \!\pm\! 1.1$ & $\mathbf{77.7} \!\pm\! 0.9$ & $\mathbf{61.5} \!\pm\! 1.1$ & $\mathbf{82.0} \!\pm\! 1.0$ & $\mathbf{78.0} \!\pm\! 1.0$ & $\mathbf{59.0} \!\pm\! 1.1$ & $\mathbf{80.8} \!\pm\! 1.0$ & $\mathbf{62.5} \!\pm\! 1.1$\\
SoSN+SigmE & 20 &   $\mathbf{59.4} \!\pm\! 0.6$ & $\mathbf{42.5} \!\pm\! 0.5$ & $\mathbf{55.0} \!\pm\! 0.5$ & $\mathbf{35.1} \!\pm\! 0.5$ & $\mathbf{38.3} \!\pm\! 0.5$ & $\mathbf{56.3} \!\pm\! 0.5$ & $\mathbf{38.7} \!\pm\! 0.5$ & $\mathbf{57.9} \!\pm\! 0.6$ & $\mathbf{59.4} \!\pm\! 0.5$ & $\mathbf{37.4} \!\pm\! 0.5$ & $\mathbf{59.0} \!\pm\! 0.5$ & $\mathbf{38.6} \!\pm\! 0.5$\\
224x224 & 5 & $\mathbf{83.9} \!\pm\! 0.9$ & $\mathbf{68.9} \!\pm\! 1.1$ & $\mathbf{82.1} \!\pm\! 0.9$ & $\mathbf{64.7} \!\pm\! 1.1$ & $\mathbf{66.6} \!\pm\! 1.1$ & $\mathbf{82.2} \!\pm\! 0.9$ & $\mathbf{65.5} \!\pm\! 1.1$ & $\mathbf{84.5} \!\pm\! 0.8$ & $\mathbf{80.6} \!\pm\! 0.8$ & $\mathbf{64.6} \!\pm\! 1.1$ & $\mathbf{83.6} \!\pm\! 0.8$ & $\mathbf{66.0} \!\pm\! 1.1$\\
224x224 & 20 & $\mathbf{63.6} \!\pm\! 0.5$ & $\mathbf{46.7} \!\pm\! 0.6$ & $\mathbf{63.6} \!\pm\! 0.5$ & $\mathbf{39.0} \!\pm\! 0.5$ & $\mathbf{43.9} \!\pm\! 0.5$ & $\mathbf{61.8} \!\pm\! 0.5$ & $\mathbf{43.7} \!\pm\! 0.5$ & $\mathbf{63.3} \!\pm\! 0.5$ & $\mathbf{62.5} \!\pm\! 0.5$ & $\mathbf{42.7} \!\pm\! 0.5$ & $\mathbf{61.5} \!\pm\! 0.5$ & $\mathbf{43.7} \!\pm\! 0.5$\\

\hline
\end{tabular}}
p1: shn+hon+clv, p2: clk+gls+scl, p3: sci+nat, p4: shx+rlc. Notation {\em x$\!\rightarrow$y} means training on exhibition {\em x} and testing on {\em y}.
\vspace{-0.2cm}
\end{table*}
}
\noindent\textbf{{\em mini}Imagenet.} Table \ref{table2} demonstrates that our method outperforms other approaches on both evaluation protocols. For image size of $84\!\times\!84$ and 5-way 1-shot experiment, our SoSN model achieved $2.5\%$ higher accuracy than Relation Net \cite{sung2017learning}. 
Our SoSN also outperforms Prototypical Net by $0.43\%$ accuracy on the 5-way 5-shot protocol. When we use $224\!\times\!224$ images to train SoSN, the accuracies on both protocols increase by $\sim\!5\%$ and $\sim\!4\%$, respectively, which illustrates that SoSN benefits from larger image sizes as its second-order matrices, defined in Table \ref{tab_methods}, become full-rank while the similarity net. needs no modifications. Moreover, Table \ref{tab_methods} shows that using 3 permutations fed to single and multiple similarity networks yields improvements by 1.8 and 2.06\% for $84\!\times\!84$ images SoSN with no permutation. We can also observe $\sim$1\% gain for $224\!\times\!224$ images.

\noindent\textbf{Open MIC.} Table \ref{table5} demonstrates that our SoSN model outperforms the Relation Net \cite{sung2017learning} for all train/test subproblems. For $5$- and $20$-Way, Relation Net scores 55.45 and 31.58\%. In contrast, our SoSN scores 68.23 and 45.31\%, resp. Lastly, our SoSN+SigmE scores 69.06 and 46.53\%, respectively. 
%
The above large gains show that the second-order  relationship descriptor combined with the SigmE pooling is beneficial for the task of similarity learning. See our supplementary material for additional evaluations.

\noindent\textbf{Power Normalization and Relationship Descriptors. }  Below we discuss AsinhE vs. SigmE pooling from Remarks \ref{re:asinhe} and \ref{re:pnpn}; the latter operator closely approximating our statistically motivated MaxExp$\,$($\pm$) pooling from Eq. \eqref{eq:pos_neg_pool2} and \eqref{eq:pos_neg_pool3}. Table \ref{table2} shows that SigmE performs $\sim\!0.9\%$ better than AsinhE. Tables \ref{table2} and \ref{table5} show $1.2$--$4.5\%$ benefit of SigmE in SoSN($\otimes$)+SigmE over SoSN($\otimes$) without it. This is consistent with \cite{koniusz2018deeper} and our statistical motivations. 

Relationship descriptors from Table \ref{tab_methods} are evaluated in Table \ref{table2} which shows that SoSN($\otimes$) outperforms Full Auto-correlation SoSN($\otimes$+F), Auto-correlation+concat. without average over $Z$ support feature vectors denoted as SoSN($\otimes$+R), and other variants. We expect that averaging over $Z$ support descriptors from $Z$ support images removes uncertainty in few-shot statistics while outer-products of support and query datapoints still benefit from spatially large conv. feature maps  and yield robust second-order statistics. This view is further supported by results for SoSN($\otimes$)+SigmE ($224\!\times\!224$).

\noindent\textbf{Permutations of Second-order Matrices. }  Table~\ref{table2}shows that stacking permuted second-order matrices yields up to $\sim$2\% improvement for $1$- and $5$-shot learning. We expect that, as second-order matrices contain correlation patterns globally, this simple strategy must benefit the similarity network which uses local patches/filters in convolutional layers. For more results \eg, w.r.t. numbers of permuted second-order matrices, kindly see our suppl. material.
\iftrue
{
\begin{table*}[b]
\vspace{-0.1cm}
\caption{Evaluations on the Open MIC dataset for Protocol II.}
\vspace{-0.2cm}
\label{table5}
\makebox[\textwidth]{
\hspace{-0.3cm}
\setlength{\tabcolsep}{0.10em}
\renewcommand{\arraystretch}{0.70}
\fontsize{8.5}{9}\selectfont
\begin{tabular}{l|c|c|c|c|c|c|c|c|c|c|c}
Model & $ L $ & $shn$ & $hon$& $clv$& $clk$& $gls$ &$scl$& $sci$& $nat$& $shx$& $rlc$ \\
\hline
Relation Net & 5 & $43.2 \!\pm\! 1.0$ & $49.6 \!\pm\! 1.0$ & $49.8 \!\pm\! 1.0$ & $62.1 \!\pm\! 1.1$ & $59.3 \!\pm\! 1.0$ & $51.5 \!\pm\! 1.0$ & $45.9 \!\pm\! 1.0$ & $54.8 \!\pm\! 1.0$ & $71.1 \!\pm\! 1.0$ & $72.0 \!\pm\! 1.0$ \\
Relation Net & 20 & $20.8 \!\pm\! 0.4$ & $25.7 \!\pm\! 0.4$ & $26.1 \!\pm\! 0.4$ & $34.3 \!\pm\! 0.4$ & $35.5 \!\pm\! 0.5$ & $18.4 \!\pm\! 0.3$ & $18.6 \!\pm\! 0.3$ & $32.8 \!\pm\! 0.5$ & $51.8 \!\pm\! 0.5$ & $48.2 \!\pm\! 0.5$ \\
Relation Net & 30 & $18.1 \!\pm\! 0.3$ & $21.1 \!\pm\! 0.3$ & $23.2 \!\pm\! 0.3$ & $27.0 \!\pm\! 0.3$ & $31.8 \!\pm\! 0.4$ & $12.8 \!\pm\! 0.2$ & $12.4 \!\pm\! 0.2$ & $27.1 \!\pm\! 0.3$ & $40.6 \!\pm\! 0.4$ & $41.0 \!\pm\! 0.4$ \\ \hline
SoSN & 5 & $60.3 \!\pm\! 1.1$ & $62.6 \!\pm\! 1.1$ & $60.5 \!\pm\! 1.1$ & $72.9 \!\pm\! 1.1$ & $74.3 \!\pm\! 1.1$ & $72.3 \!\pm\! 1.0$ & $53.4 \!\pm\! 1.1$ & $68.0 \!\pm\! 1.1$ & $77.0 \!\pm\! 1.0$ & $78.4 \!\pm\! 1.0$ \\
SoSN & 20 & $36.3 \!\pm\! 0.5$ & $34.4 \!\pm\! 0.5$ & $32.3 \!\pm\! 0.4$ & $46.5 \!\pm\! 0.5$ & $48.6 \!\pm\! 0.5$ & $50.1 \!\pm\! 0.5$ & $24.8 \!\pm\! 0.4$ & $42.0 \!\pm\! 0.5$ & $58.5 \!\pm\! 0.5$ & $53.2 \!\pm\! 0.5$ \\
SoSN & 30 & $32.0 \!\pm\! 0.4$ & $32.6 \!\pm\! 0.4$ & $27.7 \!\pm\! 0.3$ & $44.3 \!\pm\! 0.4$ & $44.0 \!\pm\! 0.4$ & $44.2 \!\pm\! 0.4$ & $16.3 \!\pm\! 0.3$ & $39.9 \!\pm\! 0.4$ & $53.7 \!\pm\! 0.4$ & $47.8 \!\pm\! 0.4$  \\ \hline
SoSN+SigmE & 5 & $61.5 \!\pm\! 1.1$ & $63.6 \!\pm\! 1.1$ & $61.7 \!\pm\! 1.1$ & $74.5 \!\pm\! 1.2$ & $74.9 \!\pm\! 1.1$ & $72.9 \!\pm\! 1.0$ & $54.2 \!\pm\! 1.0$ & $68.9 \!\pm\! 1.1$ & $78.0 \!\pm\! 1.0$ & $79.1 \!\pm\! 1.0$ \\
SoSN+SigmE  & 20 & $37.4 \!\pm\! 0.5$ & $37.5 \!\pm\! 0.5$ & $33.9 \!\pm\! 0.4$ & $48.6 \!\pm\! 0.5$ & $53.2 \!\pm\! 0.5$ & $50.5 \!\pm\! 0.5$ & $25.1 \!\pm\! 0.4$ & $45.3 \!\pm\! 0.5$ & $58.9 \!\pm\! 0.5$ & $56.6 \!\pm\! 0.5$ \\
SoSN+SigmE  & 30 & $32.8 \!\pm\! 0.4$ & $33.4 \!\pm\! 0.4$ & $30.6 \!\pm\! 0.3$ & $45.3 \!\pm\! 0.5$ & $49.9 \!\pm\! 0.4$ & $49.3 \!\pm\! 0.3$ & $21.1 \!\pm\! 0.3$ & $40.8 \!\pm\! 0.4$ & $54.9 \!\pm\! 0.4$ & $48.7 \!\pm\! 0.5$  \\
\hline
\end{tabular}}
\centering Training on source images and testing on target images for every exhibition respectively.
\vspace{-0.2cm}
\end{table*}
}
\fi
\ifdefined\arxiv
\newcommand{\PowH}{3.0cm}
\newcommand{\PowHB}{2.875cm}
\newcommand{\PowW}{3.65cm}
\else
\newcommand{\PowH}{3.2cm}
\newcommand{\PowHB}{3.4cm}
\newcommand{\PowW}{4.5cm}
\fi

\begin{figure*}[t]
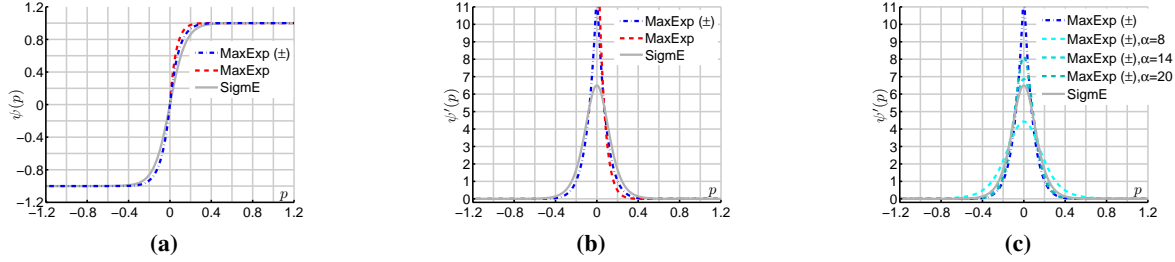

\centering
\vspace{-0.2cm}
\hspace{-0.45cm}
\begin{subfigure}[t]{0.32\linewidth}
\centering\includegraphics[trim=0 0 0 0, clip=true, height=\PowH]{images/accv_pow1.pdf}\vspace{-0.2cm}
\caption{\label{fig:pow1}}
\end{subfigure}
\begin{subfigure}[t]{0.32\linewidth}
\centering\includegraphics[trim=0 0 0 0, clip=true, height=\PowH]{images/accv_pow2.pdf}\vspace{-0.2cm}
\caption{\label{fig:pow2}}
\end{subfigure}
\begin{subfigure}[t]{0.32\linewidth}
\centering\includegraphics[trim=0 0 0 0, clip=true, height=\PowH]{images/accv_pow3.pdf}\vspace{-0.2cm}
\caption{\label{fig:pow3}}
\end{subfigure}


\vspace{-0.2cm}
\caption{MaxExp$\,$($\pm$), MaxExp, and SigmE are illustrated in Figure \ref{fig:pow1}. Note that MaxExp$\,$($\pm$) extends MaxExp to negative values and is closely approximated by SigmE. Derivatives of MaxExp$\,$($\pm$), MaxExp, and SigmE are in Figure \ref{fig:pow2} while derivatives of MaxExp$\,$($\pm$) with the soft maximum are demonstrated to be smooth in Figure \ref{fig:pow3}.}
\vspace{-0.3cm}
\label{fig:power-norms}
\end{figure*}

\section{Conclusions}
\label{sec:conclusions}

In this paper, we present an end-to-end trainable SoSN model leveraging second-order statistics and Power Normalization to learn the similarity between images in the context of few-shot learning. We investigate how to capture relations between the query and support second-order matrices; including our permutation-based representation. We present a statistically motivated interpretation of Power Normalization and investigate the influence of different operations and image sizes on results 
and obtain state-of-the-art results on Omniglot and \textit{mini}Imagenet. 
In addition, we introduce  one-shot learning protocol for the Open MIC dataset and show that SoSN outperforms the baseline.
\section*{Acknowledgement}
This work is partially supported by CSIRO Scientific Computing, and the China Scholarship Council (CSC Student ID 201603170283).
\vspace{-0.3cm}
\renewcommand*\appendixpagename{Appendix}
\begin{appendices}

\vspace{-0.1cm}
\section{Deriving/Interpreting Pooling via Sigmoid}
\label{sec:pool_der}

Proposition \ref{pr:cooc} and Remark \ref{re:maxexp} state that quantity $1\!-\!(1\!-\!p)^N$ is the probability of at least one co-occurrence being detected in the pool of the $N$ i.i.d. trials performed according to the Bernoulli distribution with the success probability $p$ of event $(\phi_{n}\!\cap\!\phi'_{n}\!=\!1)$ for event vectors in $\vphi,\vphi'\!\!\in\!\{0,1\}^{N}$. Below we extend this theory to negative co-occurrences which we interpret as two anti-correlating visual words.

\begin{proposition}
\label{pr:cooc_anti}
Assume an event vector $\vphi^*\!\!\in\!\{0,1,-1\}^{N}$ constructed from event vectors $\vphi,\vphi'\!\!\in\!\{0,1,-1\}^{N}$, which stores the $N$ trials performed according to the Multinomial distribution under i.i.d. assumption, 
 for which the probability $p$ of an event $(\phi^*\!\!=\!\phi_{n}\!\cdot\phi'_{n}\!=\!1)$ denotes a co-occurrence, the probability  $q$ of an event  $(\phi^*\!\!=\phi_{n}\!\cdot\phi'_{n}\!=\!-1)$ denotes a negatively-correlating co-occurrence, and $1\!-\!p\!-\!q$, for $(\phi^*\!\!=\phi_{n}\!\cdot\phi'_{n}\!=0)$ denotes the lack of the first two events, and $p$ is estimated as an expected value $p\!=\!\avg_n\phi^*_n$. Then the probability of at least one co-occurrence event $(\phi^*\!\!=\!1)$ minus the probability of at least one negatively-correlating co-occurrence event $(\phi^*\!\!=\!-1)$ in $N$ trials, encoded in $\phi^*_n$, becomes:
\begin{equation}
\label{eq:my_maxexp3}
\psi\!=\!(1\!-\!q)^{N}\!-\!(1\!-\!p)^{N}.
\end{equation}
\end{proposition}
\begin{proof}
One can derive the above difference of probabilities by directly applying the Multinomial calculus as follows:
\begin{align}
& \!\!\!\!\!\!\!\textstyle\sum\limits_{n=1}^{N}\sum\limits_{n'=0}^{N\!-n}\!\!\binom{N}{n,n'\!,N\!-n-n'\!-n''\!}\left(p^nq^{n'\!}\!-\!p^{n'\!}q^{n}\right)(1\!\!-\!\!p\!\!-\!\!q)^{N\!-n-n'}\!.
\label{eq:my_maxexppr2}
\end{align}
One can verify algebraically/numerically that Eq. \eqref{eq:my_maxexppr2} and \eqref{eq:my_maxexp3} are equivalent.
\end{proof}

\begin{remark}
\label{re:pos_neg_pool}
If we decide from the beginning that the $N$ trials contain only either co-occurrences with the event probability $p\!=\!\rho p'\!$ or negatively-correlating co-occurrences with the event probability $q\!=\!(1\!-\!\rho)q'\!$, we can set $q\!:=\!(1\!-\!\rho)\max(0,-p)$ and $p\!:=\!\rho\max(0,p)$, which leads to:
\begin{align}
& \psi\!=\!\left(1\!-\!(1\!-\!\rho)\max(0,-p)\right)^N \!\!-\left(1\!-\!\rho\max(0,p)\right)^N\!\!\!,
\label{eq:pos_neg_pool2}
\end{align}
where $0\!\leq\rho\!\leq\!1$, $-1\!\leq\!p\!\leq\!1$, while $\rho p\!\geq\!0$ and $(1\!-\!\rho) p\!<\!0$ encode the probability of event of co-occurrence and negatively-correlating co-occurrence, respectively.
\end{remark}
\begin{remark}
\label{re:pos_neg_pool2}
In practice, we trace-normalize $\mM$, use reg. $\lambda\!\approx\!1e\!-\!6$ and $0\!<\!\eta\!\approx\!N$ following Remarks \ref{re:pnpn} and \ref{re:maxexp}:
\begin{align}
& \!\!\!\!\mPsi\!=\!\mygthree{\,\mM,\eta\,}\!=\!\left(1-(1\!-\!\rho)\max(0,\frac{-\mM}{\trace(\mM)\!+\!\lambda})\right)^\eta \!\!\nonumber\\
&\qquad\qquad\qquad\quad-\left(1-\rho\max(0,\frac{\mM}{\trace(\mM)\!+\!\lambda})\right)^\eta\!\!\!.
\label{eq:pos_neg_pool3}
\end{align}
We also apply the soft maximum funct. $\max(\mX,\mY; \alpha)=\frac{1}{\alpha}\log\left(\exp(\alpha \mX)+\exp(\alpha \mY)\right)$ which role is to make derivatives of Eq. \eqref{eq:pos_neg_pool2} and \eqref{eq:pos_neg_pool3} smooth. Lastly, $\alpha$ controls the softness of the max function and $\alpha\!\gg\!\eta$. We call the above operator as MaxExp$\,$({$\pm$}) 
in contrast to MaxExp in Eq. \eqref{eq:my_maxexp3}.
\end{remark}

Figure \ref{fig:power-norms} demonstrates that Eq. \eqref{eq:pos_neg_pool2} and \eqref{eq:pos_neg_pool3} are closely approximated by SigmE from Eq. \eqref{eq:sigmoid} for $\rho\!=\!0.5$. Moreover, derivative of MaxExp$\,$({$\pm$}) with soft maximum is smooth which is essential in back-propagation (non-smooth objectives do not guarantee convergence of the majority of optimization algorithms). As SigmE approximates closely our MaxExp$\,$({$\pm$}), we use SigmE in our experiments. We assert that our relationship descriptors/operators $\vartheta$ benefit from Power Normalization which, according to Prop. \ref{pr:cooc_anti}, can detect co-occurring responses of CNN filters and discards quantities of such responses which correlate with repeatable visual stimuli (\eg~variable areas of textures) which otherwise would introduce nuisance variability into representations. This nuisance variability would be further amplified when learning \eg~pair-wise relations as pairs of images would introduce nuisance quantities $\varepsilon$ and $\varepsilon^*\!$ that would result in a total nuisance variability of $\varepsilon\varepsilon^*\!\!\gg\!\max(\varepsilon,\varepsilon^*\!)$.

%
\end{appendices}


{\small
\bibliographystyle{ieee}
\bibliography{fsl}
}

\end{document}